\newcommand{\toworkon}[1]{}
\title{The Multi-fidelity Multi-armed Bandit}
\newcommand{\instcmu}{$\,^\natural$}
\newcommand{\instrice}{$\,^\diamondsuit$}
\newcommand{\authspace}{$\;\;$}
\author{
Kirthevasan Kandasamy\instcmu, \authspace
Gautam Dasarathy\instrice, \authspace
Jeff Schneider\instcmu, \authspace
Barnab\'as P\'oczos\instcmu 
\\
  \instcmu $\,$Carnegie Mellon University,  \authspace
  \instrice $\,$Rice University \\
 {\small\texttt{\{kandasamy, gautamd, schneide, bapoczos\}@cs.cmu.edu}}
   \\
}
\begin{document}
\pdfoutput=1

\maketitle

\newcommand{\insertAlgoMFUCB}{
\begin{algorithm}
\begin{itemize}
\item for $t = 1, 2,\dots$
  \begin{enumerate}
  \item Choose $\It \in \argmax_{k\in\allArms} \,\BBkt$. 
    \hspace{0.2in} (See equation~\eqref{eqn:BktDefn}.)
  \item $\mt = \min_m \;\{\,m \;|\; 
          \psiinv(\rho\log t/\TmIttmo) \geq \gammam
        \;\;\vee\;\; m = M \}$ 
    \hspace{0.2in} (See equation~\eqref{eqn:gammamdefns}.)
  \item Play $X\sim\PmtIt$.
  \end{enumerate}
\end{itemize}
\caption{\hspace{0.05in}\mfucb \label{alg:mfucb}}
  \vspace{-0.05in}
\end{algorithm} }

\newcommand{\insertEETmknTable}{
\newcommand{\diagcolour}{black}
\begin{table}
\centering
\begin{tabular}{|c|c|c|p{0.5cm}cp{0.5cm}|c|c|}
  \hline
 & $\Kcalmm{1}$ & $\Kcalmm{2}$ & & $\Kcalm$ & & $\KcalM$ & $\optArms$ \\
\hline
$\EE[\Tmmkktt{1}{k}{n}]$ & $\textcolor{\diagcolour}{\frac{\logn}{\psiDeltaonekb}}$ &
$\frac{\logn}{\psigammaone}$  & 
\dots & $\frac{\logn}{\psigammaone}$ & \dots &  
 $\frac{\logn}{\psigammaone}$ & $\frac{\logn}{\psigammaone}$  \\
\hline
$\EE[\Tmmkktt{2}{k}{n}]$ & \multirow{7}{*}{$\bigO(1)$} & 
  $\textcolor{\diagcolour}{\frac{\logn}{\psiDeltatwokb}}$&
\dots & $\frac{\logn}{\psigammatwo}$ & \dots &  
 $\frac{\logn}{\psigammatwo}$ & $\frac{\logn}{\psigammatwo}$  \\
\cline{1-1} \cline{3-8}
$\vdots$ & & & & & & & \\
$\EE[\Tmkn]$ & & \multirow{4}{*}{$\bigO(1)$} & \dots &
  $\textcolor{\diagcolour}{\frac{\logn}{\psiDeltamkb}}$&
\dots & $\frac{\logn}{\psigammam}$ &  $\frac{\logn}{\psigammam}$ \\
$\vdots$ & & & & & & & \\
\cline{1-1} \cline{4-8}
$\EE[\TMkn]$ & & & & $\bigO(1)$ & & 
  $\textcolor{\diagcolour}{\frac{\logn}{\psiDeltaMkb}}$ & $\bigOmega(n)$ \\
\hline
\end{tabular}
\vspace{-0.00in}
\caption{
Bounds on the expected number of plays for each $k\in\Kcalm$ (columns) at each
fidelity (rows) after $n$ time steps (i.e. $n$ plays at any fidelity)  in \mfucb.
\label{tb:upperbound}
\vspace{-0.20in}
}
\end{table}
}

\newcommand{\insertFigSets}{
\begin{figure}
\centering
  \begin{minipage}[c]{2.4in}
    \includegraphics[width=2.3in]{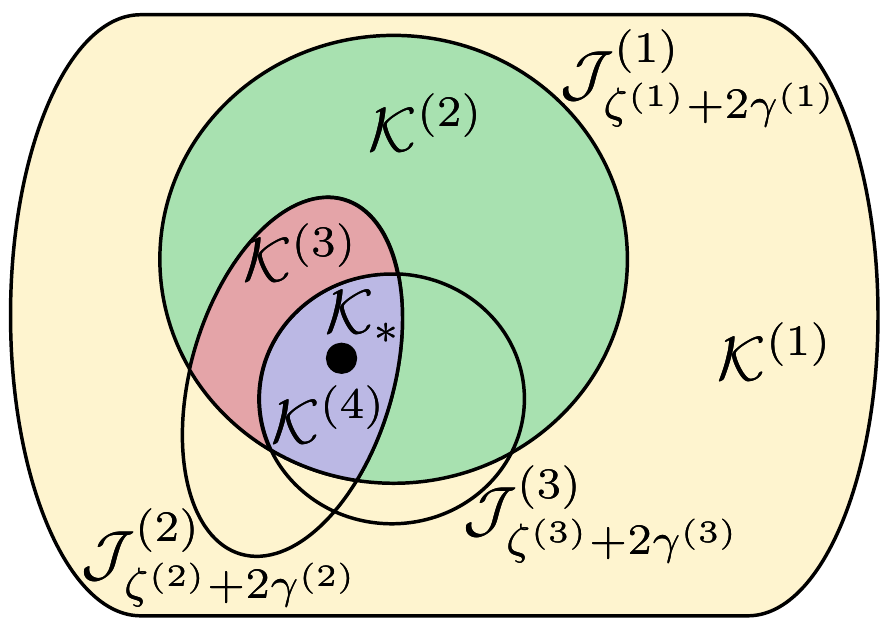}
  \end{minipage} \hspace{0.1in}
  \begin{minipage}[l]{2.7in}
  \vspace{-0.1in}
    \caption{
  Illustration of the partition $\Kcalm$'s for a $M=4$ fidelity problem. The sets 
  $\Jcalmzg$
are indicated next to their boundaries. $\Kcalone,\Kcaltwo,\Kcalmm{3},\Kcalmm{4}$ are
shown in yellow, green, red and purple respectively. The optimal arms $\optArms$ 
are shown as a black circle. 
    } 
\label{fig:Kcalms}
  \end{minipage}
  \vspace{-0.00in}
\end{figure}
}

\newcommand{\imarrwthree}{1.87in}
\newcommand{\imhspthree}{-0.15in}
\newcommand{\imcorners}{-0.25in}

\newcommand{\insertSimOneFigure}{
\begin{figure*}
\centering
\hspace{\imcorners}
\subfigure{
  \includegraphics[width=\imarrwthree]{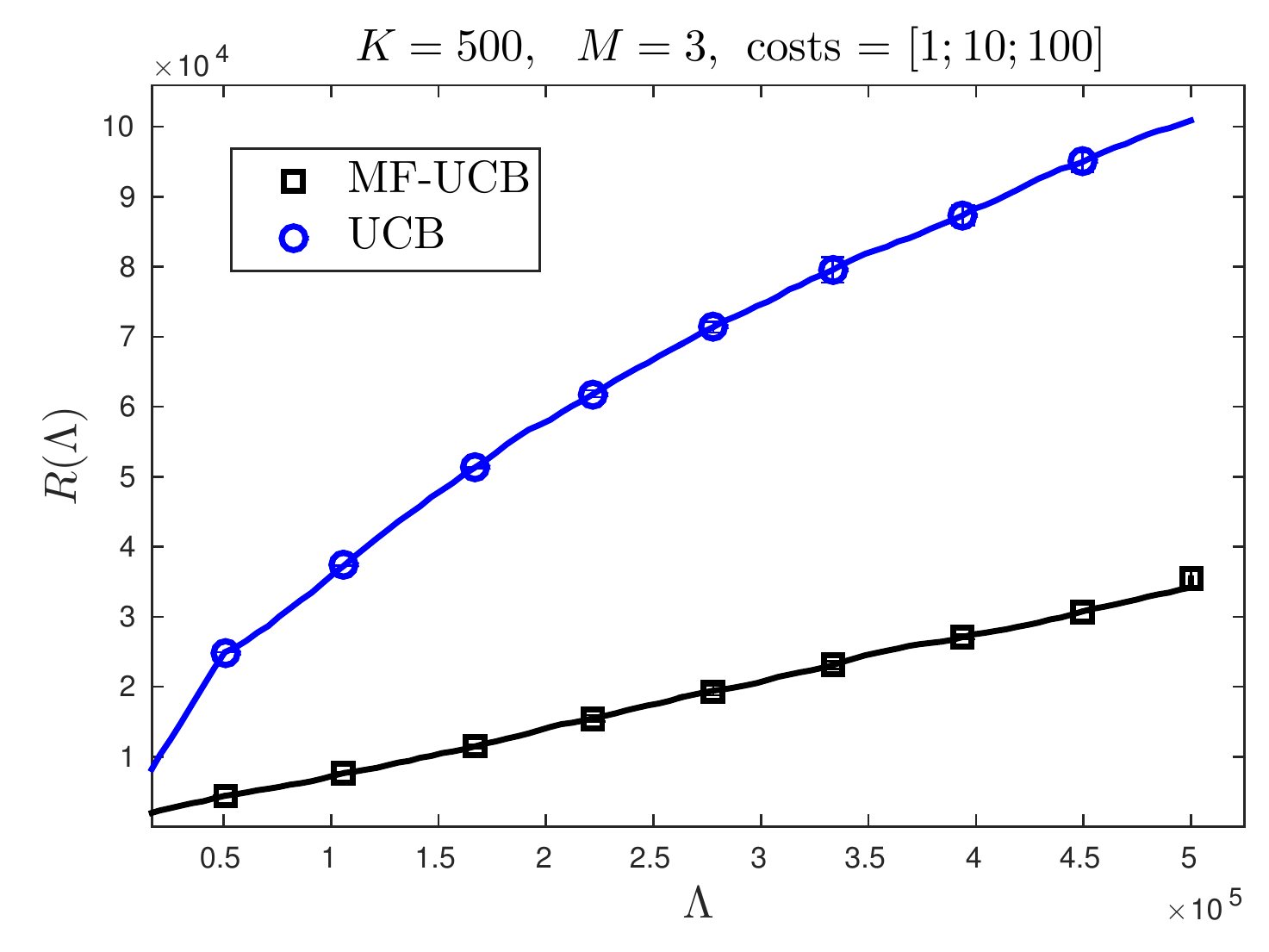} \hspace{\imhspthree}
  \label{fig:G500}
} 
\subfigure{
  \includegraphics[width=\imarrwthree]{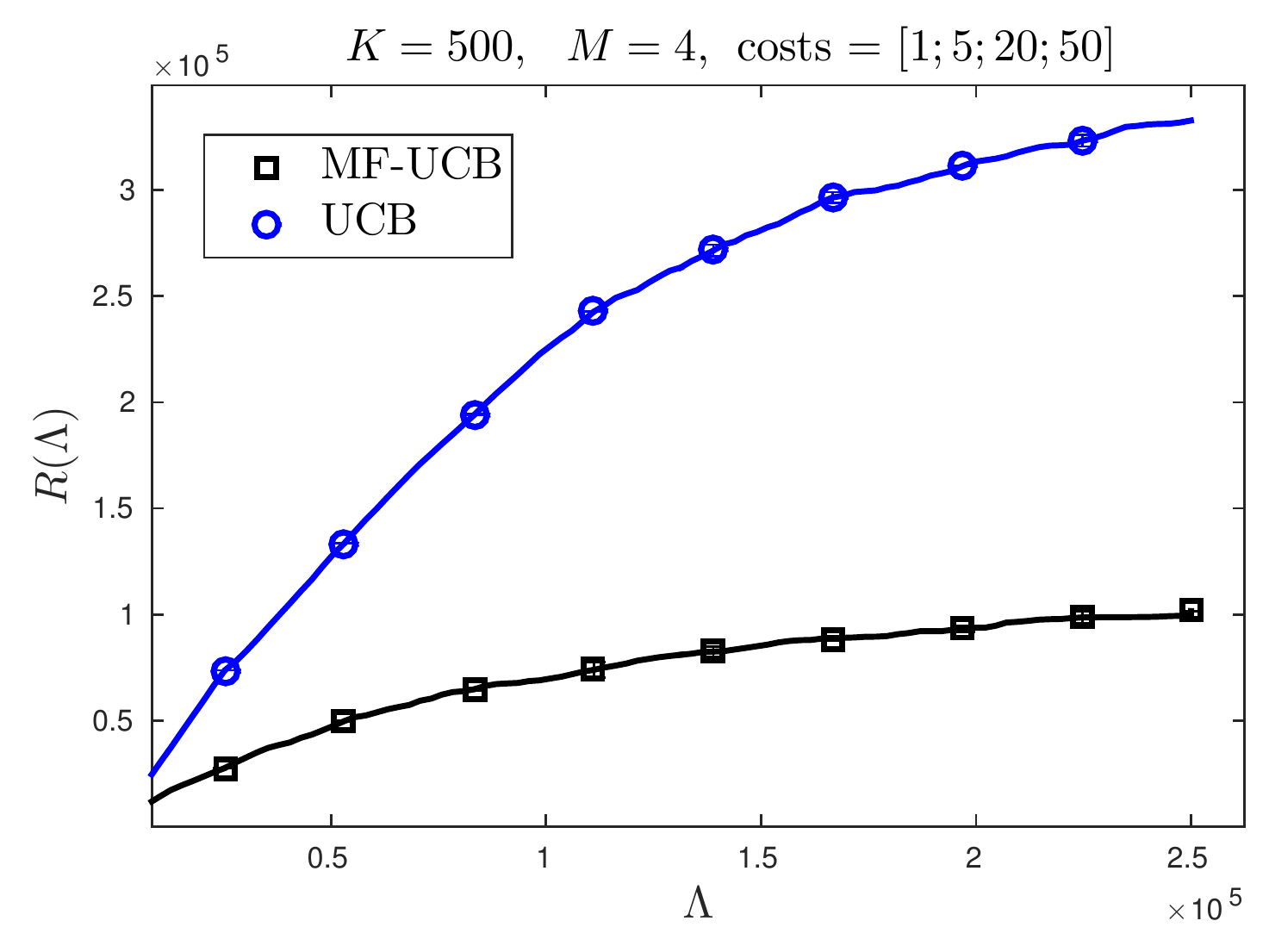} \hspace{\imhspthree}
  \label{fig:G400}
}
\subfigure{
  \includegraphics[width=\imarrwthree]{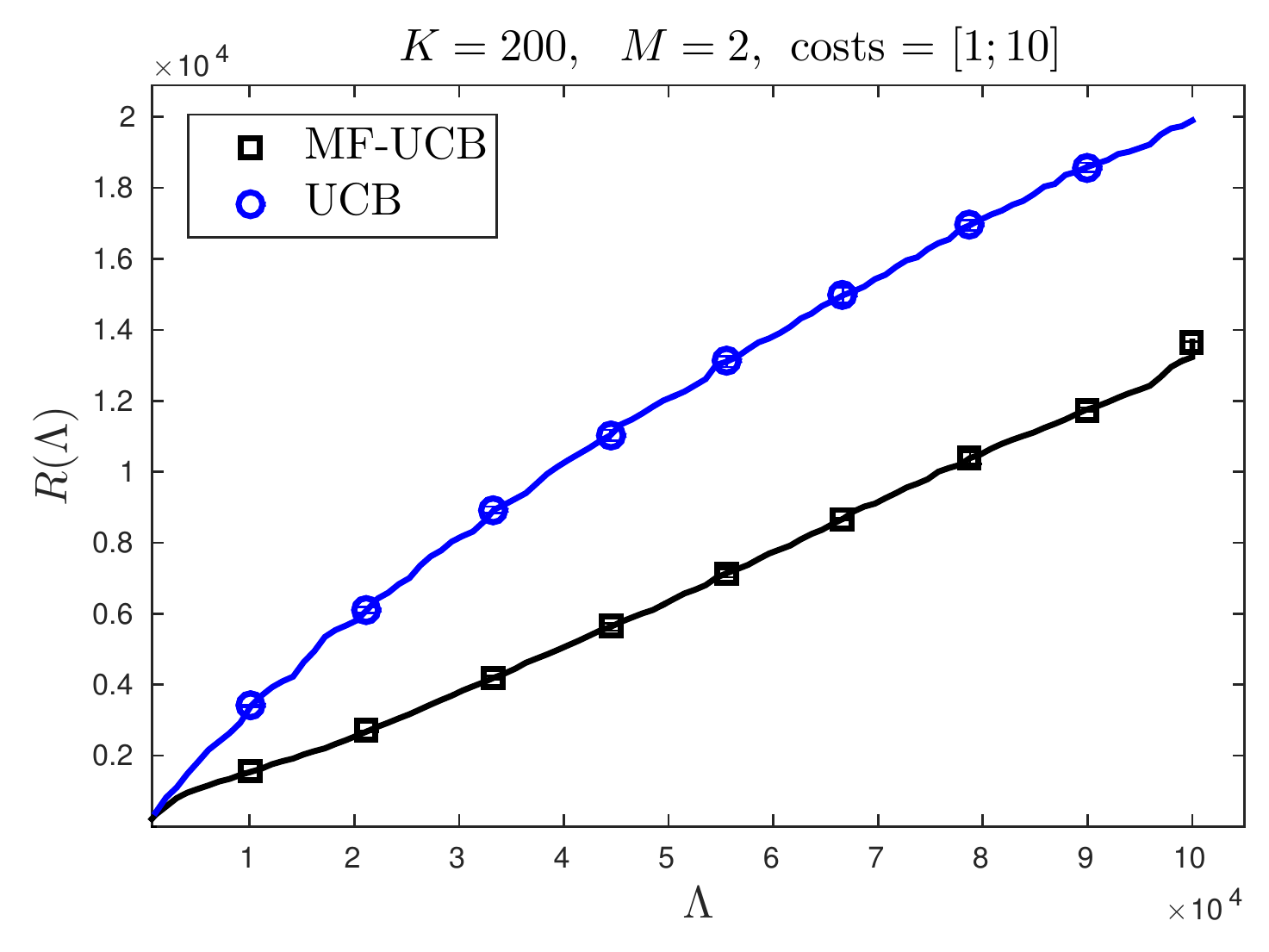} \hspace{\imcorners}
  \label{fig:B200}
}
\\[-0.15in]
\hspace{\imcorners}
\subfigure{
  \includegraphics[width=\imarrwthree]{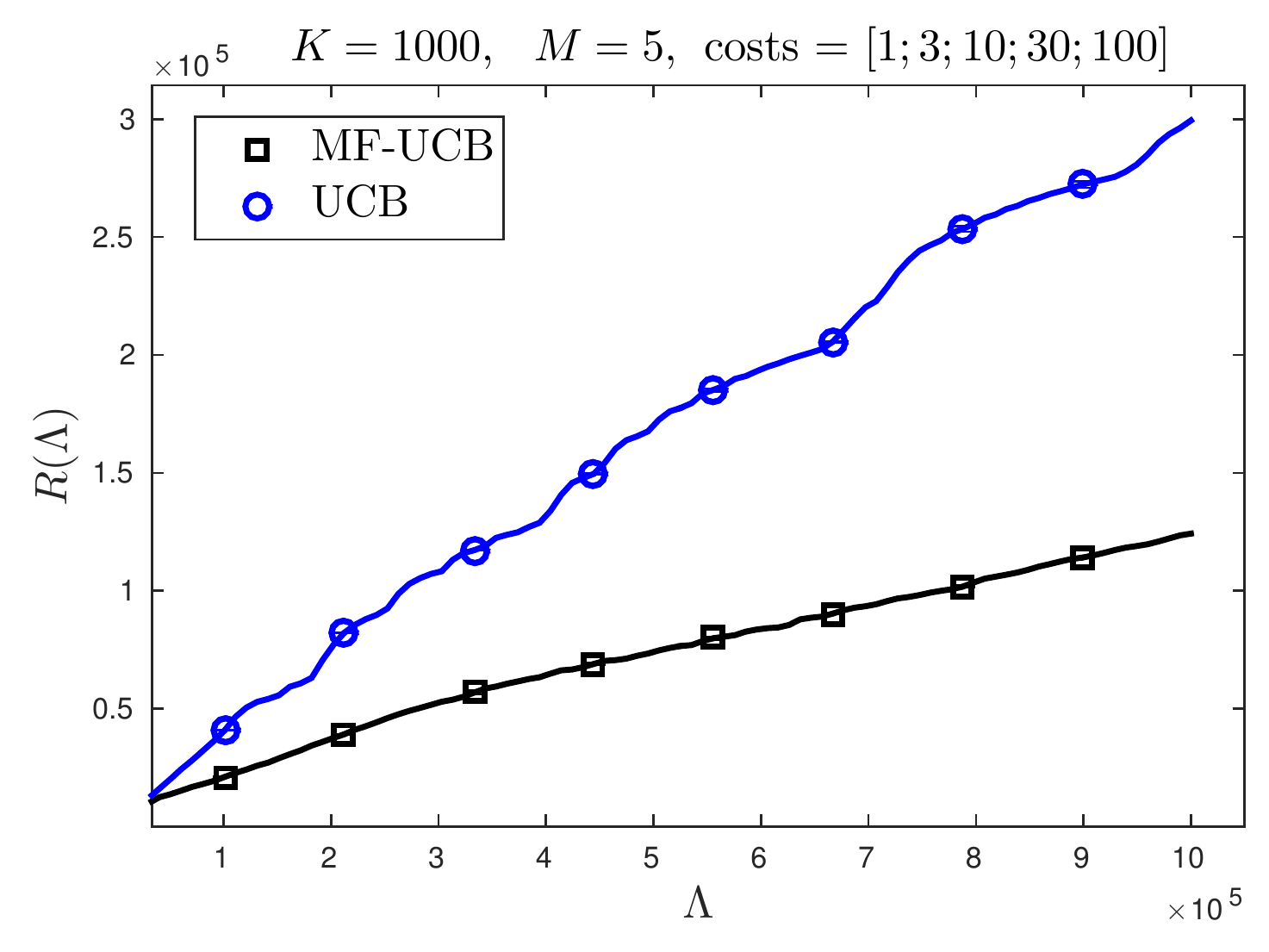} \hspace{\imhspthree}
  \label{fig:B1000}
}
\subfigure{
  \includegraphics[width=\imarrwthree]{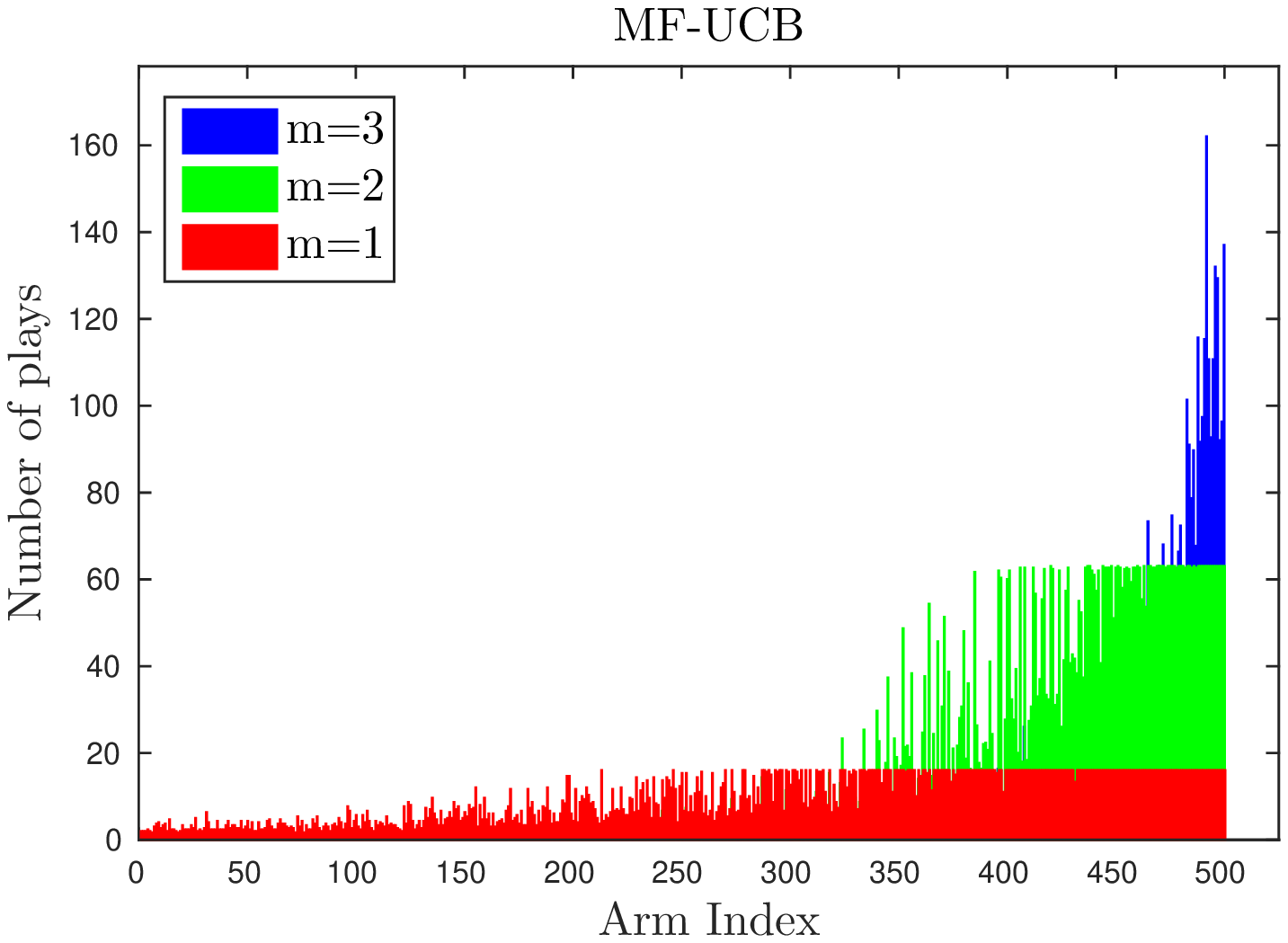} \hspace{\imhspthree}
  \label{fig:G500mfucb}
}
\subfigure{
  \includegraphics[width=\imarrwthree]{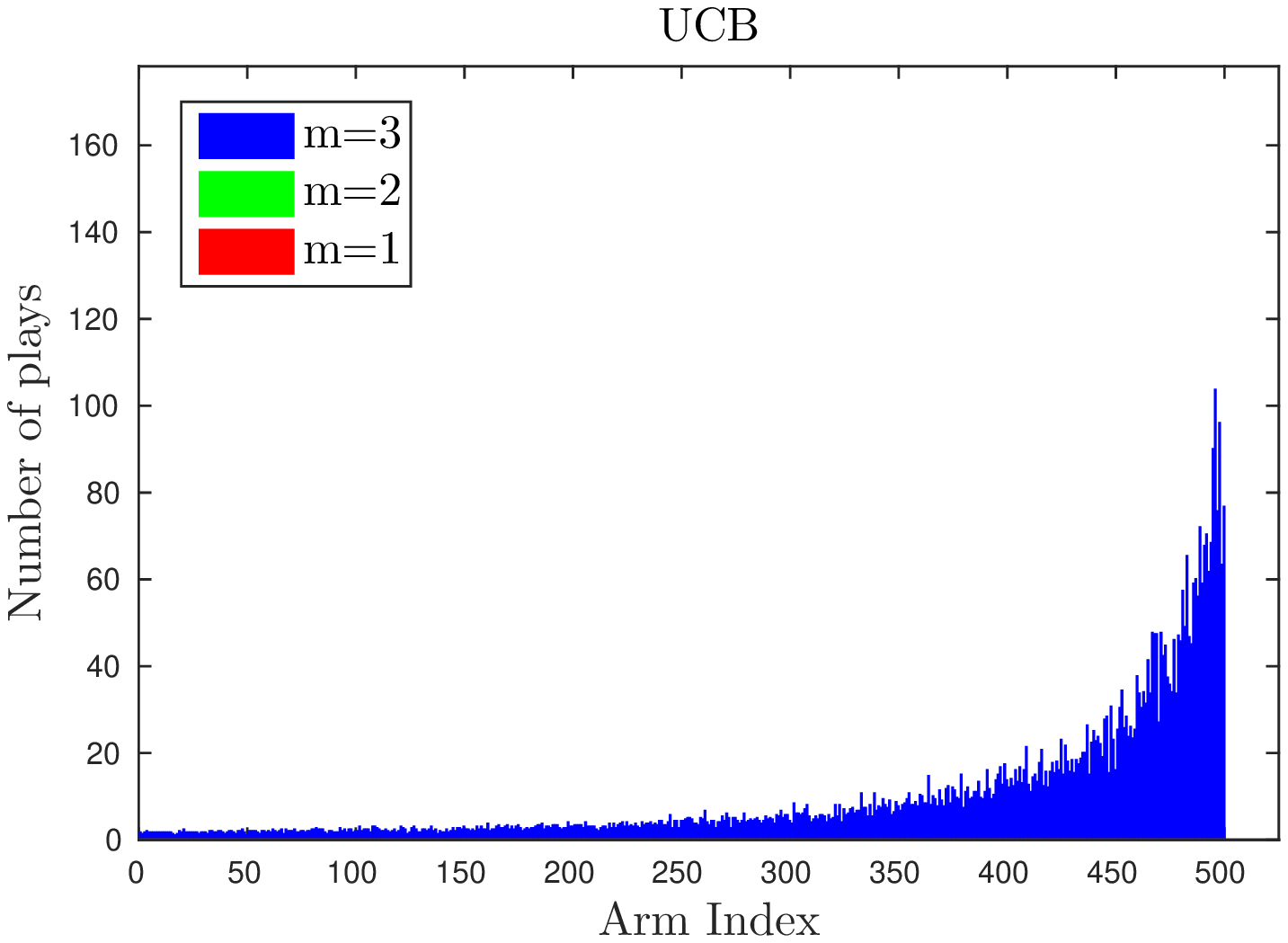} \hspace{\imcorners}
  \label{fig:G500ucb}
}
\\ \vspace{-0.1in}
\caption[]{\small
\label{fig:simOne}
Simulations results on the synthetic problems. 
The first four figures compares \ucbs against \mfucbs on four synthetic problems.
The title states $K,M$ and the costs $\costone,\dots,\costM$.
The first two used Gaussian rewards and the last two used Bernoulli rewards.
The last two figures show the number 
of plays by \ucbs and \mfucbs on a $K=500, M=3$ problem with Gaussian observations
(corresponding to the first figure).
\vspace{-0.1in}
}
\end{figure*}
}

\newcommand{\imhsptwo}{0.1in}
\newcommand{\imarrwtwo}{2.3in}
\newcommand{\insertArmsFigure}{
\begin{figure*}
\centering
\hspace{\imcorners}
\subfigure{
  \includegraphics[width=\imarrwtwo]{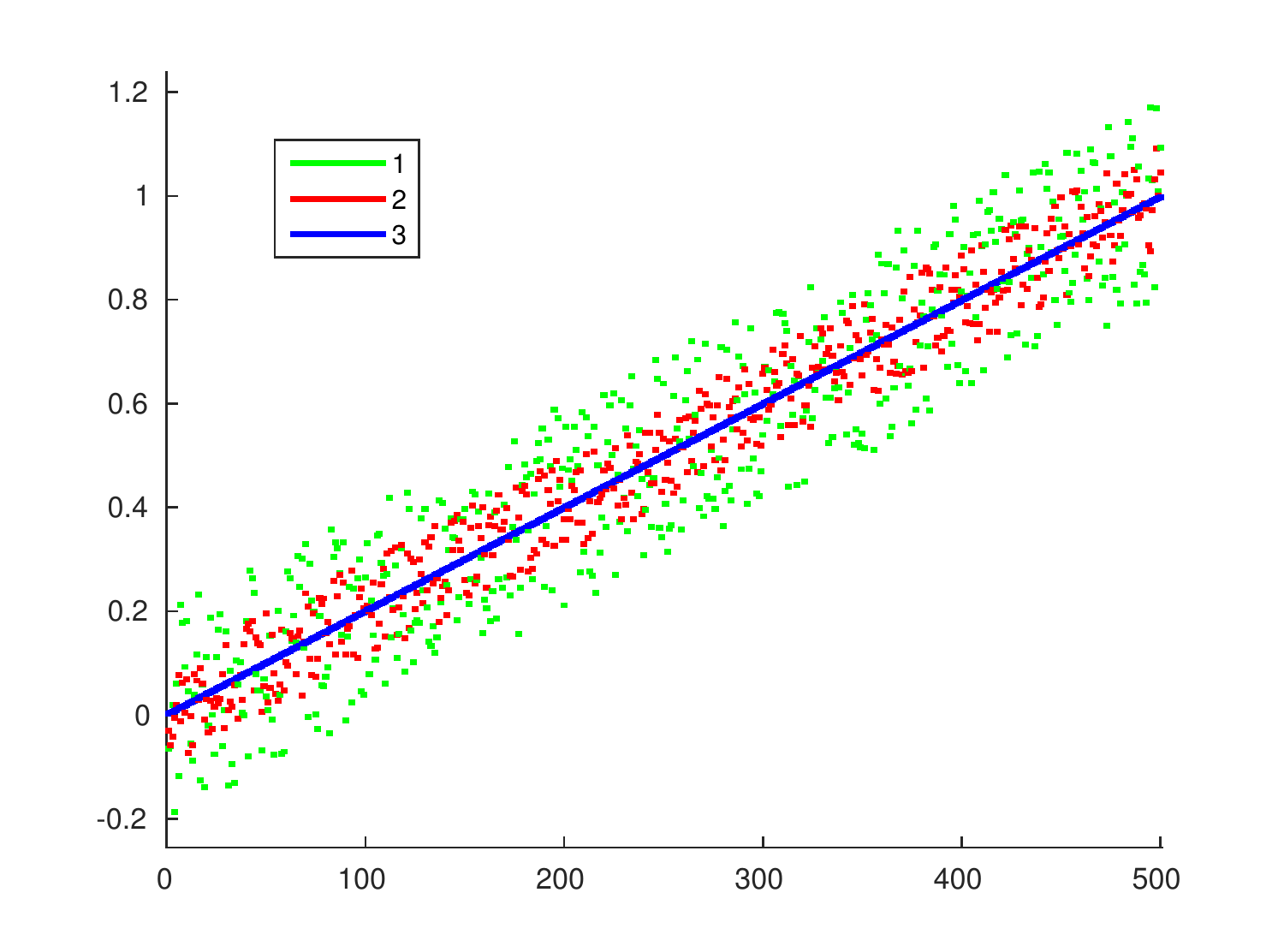} \hspace{\imhsptwo}
} 
\subfigure{
  \includegraphics[width=\imarrwtwo]{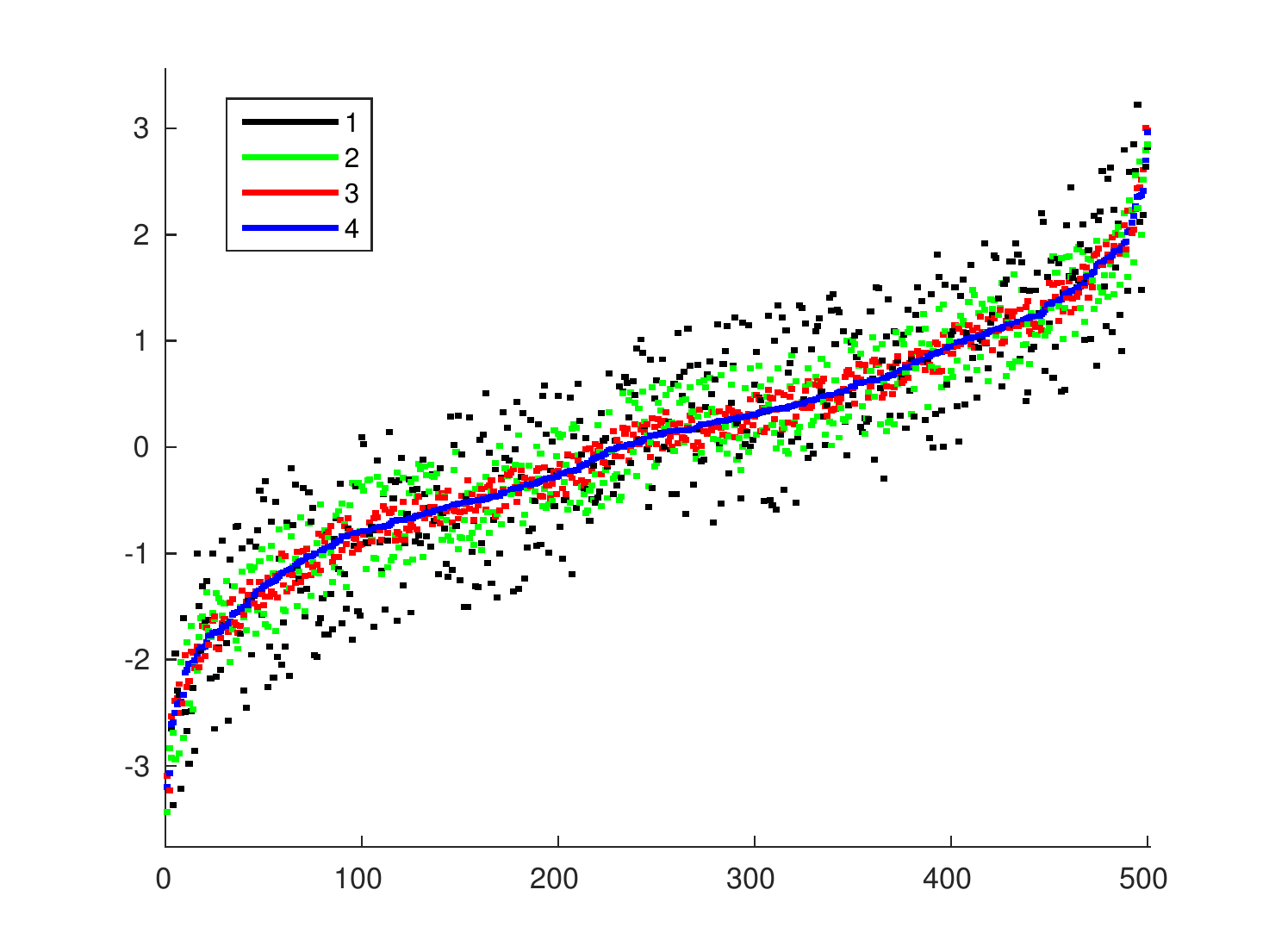} 
}
\\[-0.15in]
\hspace{\imcorners}
\subfigure{
  \includegraphics[width=\imarrwtwo]{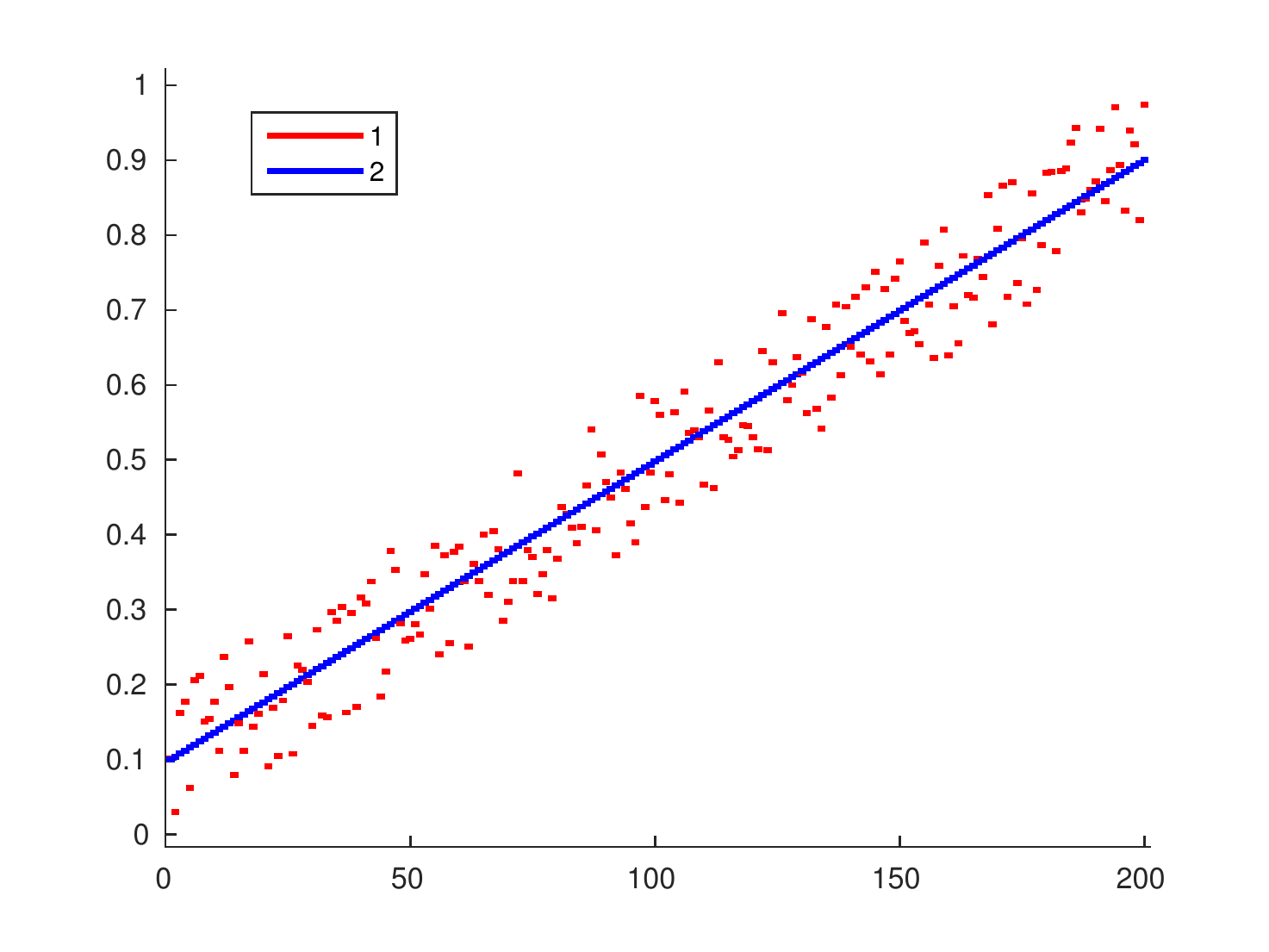} \hspace{\imhsptwo}
}
\subfigure{
  \includegraphics[width=\imarrwtwo]{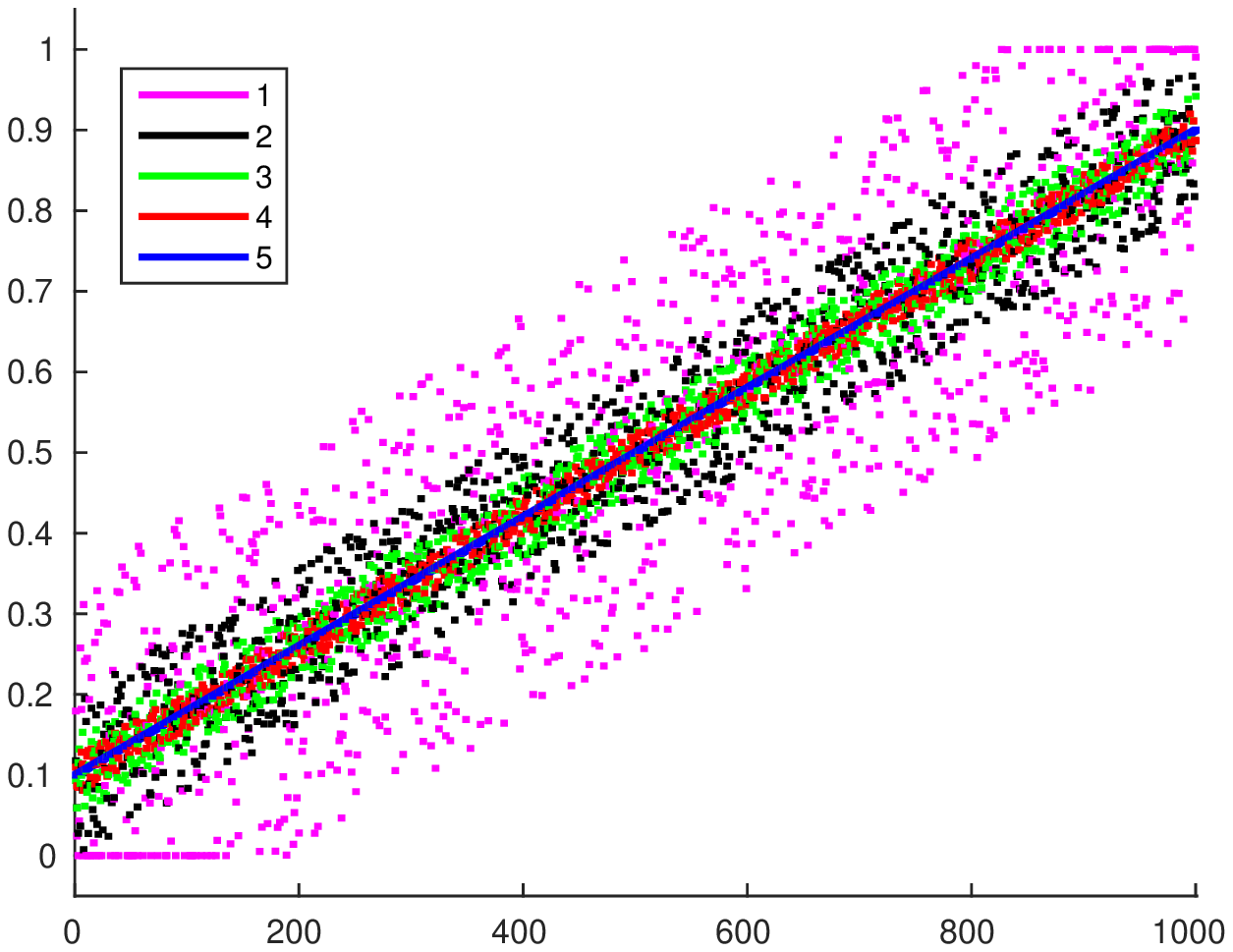} 
}
\\ \vspace{-0.1in}
\caption[]{\small
\label{fig:simArms}
An illustration of the means of the arms used the simulation probelms.
The top row are the Gaussian rewards with $(K,M)$ equal to $(500,3)$, $(500,4)$
while the second row are the Bernoulli rewards with $(200,2)$, $(1000,5)$
respectively.
\vspace{-0.1in}
}
\end{figure*}
}

\begin{abstract}
We study a variant of the classical stochastic $K$-armed bandit where 
observing the outcome of each arm is expensive, but cheap approximations to this
outcome are available.
For example, in online advertising the performance of an ad can be approximated
by displaying it for shorter time periods or to narrower audiences.
We formalise this task as a \emph{multi-fidelity} bandit, where, at each time
step, the forecaster may choose to play an arm at any one of $M$ fidelities.
The highest fidelity (desired outcome) expends cost $\costM$. 
The $m$\ssth fidelity (an approximation) expends $\costm < \costM$ and returns
a biased estimate of the highest fidelity.
We develop \mfucb, a novel upper confidence bound procedure for this setting and 
prove that it naturally adapts to the sequence of available
approximations and costs thus attaining better regret than naive strategies which ignore
the approximations.
For instance, in the above online advertising example,
\mfucbs would use the lower fidelities to quickly eliminate
suboptimal ads and reserve the larger expensive experiments on a small set of
promising candidates. 
We complement this result with a lower bound and show that \mfucbs is nearly optimal
under certain conditions.
\end{abstract}

\section{Introduction}
\label{sec:intro}
Since the seminal work of~\citet{robbins52seqDesign}, the multi-armed bandit
has become an attractive framework for 
studying exploration-exploitation trade-offs inherent to tasks arising in
online advertising, finance and other fields. 
In the most basic form of the $K$-armed
bandit~\citep{thompson33sampling, lai85bandits}, 
we have a set $\allArms =\{1,\dots,K\}$ of $K$ arms (e.g. 
$K$ ads in online advertising). 
At each time step $t=1,2,\dots$, an arm is played and a corresponding reward is realised. 
The goal is to design a strategy of plays 
that minimises the \emph{regret} after $n$ plays. The regret is the comparison,
in expectation, of
the realised reward against an oracle that always plays the best arm. 
The well known Upper Confidence Bound (\ucb) algorithm~\citep{auer03ucb}, achieves 
regret $\bigO(K\logn)$
after $n$ plays  (ignoring mean rewards) and is minimax optimal~\citep{lai85bandits}.
%

In this paper, we propose a new take on this important problem. In many practical
scenarios of interest, one can associate a cost to playing each arm. Furthermore, in many
of these scenarios, one might have access to cheaper
approximations to the outcome of the arms. 
For instance, in online advertising the goal is to maximise the cumulative number of
clicks over a given time period. Conventionally, an arm pull maybe thought of as the
display of an ad for a specific time, say one hour.
However, we may approximate its hourly performance by displaying the ad for shorter
periods. This estimate 
is \emph{biased} (and possibly noisy), as
displaying an ad for longer intervals changes user behaviour. 
It can nonetheless be useful in gauging the long run click through rate. 
We can also obtain biased estimates of an ad by displaying it only to certain geographic
regions or age groups.
Similarly one might consider \emph{algorithm selection} for machine learning
problems~\cite{baram2004online}, where the goal is to be competitive with the
best among a set of learning algorithms for a task.
Here, one might obtain cheaper approximate estimates of the performance of algorithm
by cheaper versions using less data or computation.
In this paper, we will refer to such approximations as fidelities.
Consider a $2$-fidelity problem where the cost at the low fidelity is
$\costone$ and the cost at the high fidelity is $\costtwo$.
We will present a cost weighted notion of regret for this setting for a
strategy that expends a capital of $\COST$ units. 
A classical $K$-armed bandit strategy such as \ucb, which only uses the highest
fidelity, can obtain at best 
$\bigO(\costtwo K\log(\COST/\costtwo))$ regret~\citep{lai85bandits}.
In contrast, this paper will present 
multi-fidelity strategies that achieve $\bigO\left((\costone K +
\costtwo\nKg)\log(\COST/\costtwo)\right)$ regret.
Here $\Kcalg$ is a (typically) small subset of arms with high 
expected reward that can 
be identified using plays at the (cheaper) low fidelity. 
When $\nKg < K$ and $\costone<\costtwo$, such a strategy will outperform the more
standard \ucbs algorithms. 
Intuitively, this is achieved by using the lower fidelities to eliminate several of ``bad'' arms 
and reserving expensive higher fidelity plays  for a small subset of the
most promising arms.
%
We formalise the above intuitions in the sequel. Our main contributions are,
\vspace{-0.05in}
\begin{enumerate}[leftmargin=0.25in]
\item A novel {\bf formalism} for studying bandit tasks
when one has access to multiple fidelities for each arm, with each successive
fidelity  providing a better approximation to the most expensive one.
\vspace{-0.05in}
\item A new {\bf algorithm} that we call Multi-Fidelity Upper Confidence Bound 
(\mfucb) that adapts the classical Upper Confidence Bound (UCB) strategies 
to our multi-fidelity setting. 
Empirically, we demonstrate that our algorithm outperforms naive \ucbs on 
simulations. \hspace{-0.4in}
\vspace{-0.05in}
\item A {\bf theoretical characterisation} of the performance of \mfucbs that 
shows that the algorithm (a) uses the lower fidelities to explore all arms and 
eliminates arms with low expected reward, and (b) reserves the higher fidelity plays 
for arms with rewards close to the optimal value. We derive a lower bound on 
the regret and 
demonstrate that \mfucbs is near-optimal on this problem.
\vspace{-0.05in}
\end{enumerate}

\vspace{-0.1in}
\subsection*{Related Work}
\vspace{-0.10in}
The $K$-armed bandit has been studied extensively in
the past~\citep{robbins52seqDesign,agrawal95bandit,lai85bandits}.
There has been a flurry of work on upper confidence bound (\ucb)
methods~\citep{audibert09multiarmedbandit,auer03ucb}, which adopt the optimism in the
face of uncertainty principle for bandits.
For readers unfamiliar with
\ucbs methods, we recommend Chapter 2 of~\citet{bubeck12regret}.
Our work in this paper builds on \ucbs ideas, but the multi-fidelity framework poses
significantly new algorithmic and theoretical challenges. 


There has been some interest in multi-fidelity methods for optimisation in many 
applied domains of research~\citep{huang06mfKriging,rajnarayan08mfAerodynamic}.
However, these works do not formalise or analyse notions of 
\emph{regret} in the multi-fidelity setting. 
Multi-fidelity methods are used in the robotics community for
reinforcement learning tasks by modeling each fidelity as a Markov decision 
process~\citep{cutler14mfsim}.
\citet{zhang15weakAndStrong} study active learning with a cheap weak labeler and an
expensive strong labeler. The objective of these papers however is not to handle the
exploration-exploitation trade-off inherent to the bandit setting. 
A line of work on budgeted multi-armed bandits~\citep{xia15thompson,thanh14budget}
study a variant of the $K$-armed bandit where each arm has a random reward and cost
and the goal is to play the arm with the highest reward/cost ratio as much as
possible. This is different from our setting where each arm has multiple fidelities
which serve as an approximation.
Recently, in~\citet{kandasamy16mfbo} we extended ideas in this work to analyse
multi-fidelity bandits with Gaussian process payoffs.

\section{The Stochastic $K$-armed Multi-fidelity Bandit}
\label{sec:prelims}

In the classical $K$-armed bandit, each arm $k\in\allArms = \{1,\dots,K\}$
is associated with a real valued distribution $\Pk$ with mean $\muk$. 
Let $\optArms = \argmax_{k\in\allArms} \muk$ be the set of optimal arms, 
$\kopt\in\optArms$ be an optimal arm and $\muMopt = \mu_{\kopt}$ denote the optimal 
mean value.
A bandit strategy would \emph{play} an arm $\It\in \allArms$ at each
time step $t$ and observe a sample from $\Pkk{\It}$.
Its goal is to maximise the sum of 
expected rewards  after $n$ time steps 
$\sum_{t=1}^n\muIt$, or equivalently minimise the cumulative pseudo-regret
$\sum_{t=1}^n\muMopt - \muIt$ for \emph{all values} of $n$.
In other words, the objective is to be competitive, 
in expectation, against an oracle that plays an optimal arm all the time.

In this work we differ from the usual bandit setting in the following aspect.
For each arm $k$, we have access to $M-1$ successively approximate distributions
$\Ponek,\Ptwok,\dots,\Pmmkk{M-1}{k}$ to the desired distribution $\PMk = \Pk$.
We will refer to these approximations as fidelities. Clearly, these approximations are
meaningful only if they give us some information about $\PMk$. 
In what follows, we will assume that the $m$\ssth 
fidelity mean of an arm is within $\zetam$, a \emph{known quantity}, of its highest 
fidelity mean, where $\zetam$, decreasing with $m$,  characterise the successive
approximations. That is,
$|\muMk-\mumk| \leq \zetam$ for all $k\in\allArms$ and $m=1,\dots,M$,
where 
$\zetaone > \zetatwo > \dots > \zetaM=0$ and
the $\zetam$'s are known.
It is possible for the lower fidelities to be misleading under this
assumption: there could exist an arm $k$ with $\muMk <
\muMopt=\mummkk{M}{\kopt}$ but with
$\mummkk{m}{k}>\muMopt$ and/or  $\mummkk{m}{k}>\mumkk{\kopt}$ for any
$m<M$.
In other words, we wish to explicitly account for the \emph{biases}
introduced by the lower fidelities, and not treat them as just a higher variance
observation of an expensive experiment.
This problem of course becomes interesting only when lower 
fidelities are more attractive than higher fidelities in terms of some notion
of cost. 
Towards this end, we will assign a cost $\costm$ 
(such as advertising time, money etc.) to playing an arm at fidelity $m$ where
$\costone < \costtwo \dots < \costM$.

\textbf{Notation: }
$\Tmkt$ denotes the number of plays at arm $k$, at
fidelity $m$ until $t$ time steps. $\Tgmkt$ is the number of plays at fidelities
greater than $m$. $\Tmt=\sum_{k\in\allArms}\Tmkt$ is the number of fidelity 
$m$ plays at all arms until time $t$.
$\Xbarmks$ denotes the mean of $s$ samples drawn from $\Pmk$. 
Denote $\Deltamk = \muMopt - \mumk - \zetam$.
When $s$ refers to the number of plays of an arm,
we will take $1/s = \infty$ if $s=0$. 
$\complement{A}$ denotes the complement of a set $A\subset\Kcal$.
While discussing the intuitions in our proofs and theorems we will use
$\asymp,\lesssim,\gtrsim$ to denote equality and inequalities ignoring constants.

\textbf{Regret in the multi-fidelity setting:}
A strategy for a multi-fidelity bandit problem, at time $t$, 
produces an arm-fidelity pair $(\It, \mt)$, where $\It\in \Kcal$ and 
$\mt \in \{1,\dots,M\}$, 
and observes a sample $X_t$ drawn (independently of everything else) from the distribution $\theta_{\It}^{(\mt)}$.  The choice of 
$(\It,\mt)$ could depend on previous arm-observation-fidelity tuples 
$\{(I_i,X_i,m_i)\}_{i=1}^{t-1}$.
The multi-fidelity setting calls for a new notion of regret.
For any strategy $\Acal$ that expends $\COST$ units of the resource, we will define the
pseudo-regret $R(\COST, \Acal)$  as follows.
Let $\qt$ denote the \emph{instantaneous pseudo-reward} at time $t$ and
$\rt=\muMopt - \qt$ denote the instantaneous pseudo-regret.
We will discuss choices for $\qt$ shortly.
Any notion of regret in the multi-fidelity setting needs to 
account for this instantaneous regret along with the cost of the fidelity at which we
played at time $t$, i.e. $\costmt$. 
Moreover, we should receive no reward (maximum regret) for any unused capital.
These observations lead to the following definition,
\begin{align*}
R(\COST,\Acal) \;=\;
\COST\muMopt - \sumtN\costmt\qt \;=\;
  \underbrace{\left(\COST - \sumtN\costmt\right)\muMopt}_{\rtilde(\COST,\Acal)} 
  \;\;+\;\;
  \underbrace{\sumtN \costmt \rt}_{\Rtilde(\COST,\Acal)}.
\numberthis \label{eqn:regretDefn} \\[-0.20in]
\end{align*}
Above, $N$ is the (random) number of plays within
capital $\COST$ by $\Acal$, 
i.e. the largest $n$ such that $\sum_{t=1}^n\costmt \leq \COST$. 
To motivate our choice of $\qt$ we consider an online advertising example where
$\costm$ is the advertising time at fidelity
$m$ and $\mumk$ is the expected number of clicks per unit time. While we
observe from $\PmtIt$ at time $t$, we wish to reward the strategy according to
its highest fidelity distribution $\PMIt$. 
Therefore regardless of which fidelity we play we set $\qt = \muMIt$. 
Here, we are competing
against an oracle which plays an optimal arm at any fidelity all the time.
%
Note that we might have chosen $\qt$ to be $\mumtIt$. However, this does not reflect the motivating applications for the multi-fidelity setting that we consider. For instance, a clickbait ad might receive a high number of clicks in the short run, but its long
term performance might be poor. Furthermore, for such a choice, we may as well ignore the rich structure inherent to the multi-fidelity setting and simply play the arm $\argmax_{m,k}\mumk$ at each time. There are of course other choices for $\qt$ that result in very different notions of regret; we discuss this briefly at the end of Section~\ref{sec:conclusion}.

The distributions $\Pmk$ need to be well behaved for the problem to be tractable.
We will assume that they satisfy concentration inequalities of
the following form. For all $\epsilon>0$,
\begin{align*}
\forall\,m,\,k,\hspace{0.3in}
\PP\big(\Xbarmks - \mumk > \epsilon\big) \;<\; \nu e^{-s\psi(\epsilon)}, \hspace{0.3in}
\PP\big(\Xbarmks - \mumk < -\epsilon\big) \;<\; \nu e^{-s\psi(\epsilon)}.
\numberthis \label{eqn:concIneq}
\end{align*}
Here $\nu>0$ and 
$\psi$ is an increasing function with $\psi(0) = 0$ and
is at least increasing linearly $\psi(x) \in \bigOmega(x)$.
For example, if the distributions are sub-Gaussian, then
$\psi(x) \in \bigTheta(x^2)$.

The performance of a multi-fidelity strategy which switches from low to high
fidelities can be worsened by artificially inserting fidelities. Consider a
scenario where $\costmm{m+1}$ is only slightly larger than $\costm$ and
$\zetamm{m+1}$ is only slightly smaller than $\zetam$. This situation is
unfavourable since there isn't much that can be inferred from the  $(m+1)$\ssth
fidelity that cannot already be inferred from the $m$\ssth by expending the same cost.
We impose the following regularity condition to avoid such situations.

\begin{assumption}
\label{asm:decayCondn}
The $\zetam$'s decay fast enough such that  
$\sum_{i=1}^m \frac{1}{\psi(\zetai)} \leq \frac{1}{\psi(\zetamm{m+1})}$
for all $m<M$.
\label{asm:decayCondn}
\end{assumption}

Assumption~\ref{asm:decayCondn} 
is not necessary to analyse our algorithm, however, the performance of
\mfucbs when compared to \ucbs is most appealing when the above holds.
In cases where $M$ is small enough and can be treated as a constant, the assumption
is not necessary.
For sub-Gaussian distributions, the condition is satisfied for
an exponentially decaying
$(\zetaone, \zetatwo,\dots)$ such as $(1/\sqrt{2}, 1/2, 1/2\sqrt{2}\dots)$.

Our goal is to design a strategy $\Acal_0$ that has low expected pseudo-regret 
$\EE[\Rc(\COST, \Acal_0)]$ for
all values of (sufficiently large) $\COST$, i.e. the equivalent of an anytime
strategy, as opposed to a fixed time horizon strategy, in the usual bandit setting.
The expectation is over the observed rewards which also dictates the number of
plays $N$.
From now on, for simplicity we will write $R(\COST)$ when $\Acal$ is
clear from context and refer to it just as regret.

\section{The Multi-Fidelity Upper Confidence Bound (\mfucb) Algorithm}
\label{sec:mfucb}

As the name suggests, the  \mfucbs algorithm maintains an upper confidence bound 
corresponding to $\mumk$ for each $m\in \{1,\dots,M\}$ and $k\in \mathcal{K}$ based 
on its previous plays. Following \ucbs 
strategies~\citep{auer03ucb,audibert09multiarmedbandit}, we 
define the following set of upper confidence bounds, 
\begin{align*}
\BBmkst \;&=\; 
\;\Xbarmks \,+\, \psiinv\Big(\frac{\rho\log t}{s}\Big) + \zetam, 
\hspace{0.2in} \textrm{for all } \;m\in \left\{ 1,\ldots,M \right\}, \;k\in \mathcal{K}\\
\BBkt \;&=\; \min_{m=1,\dots,M} \BBmkTt.
\numberthis \label{eqn:BktDefn}
\end{align*}
Here $\rho$ is a parameter in our algorithm and $\psi$ is from~\eqref{eqn:concIneq}.
Each $\BBmkTt$ provides a high probability upper bound on $\muMk$
with their minimum $\BBkt$ giving the tightest bound (See Appendix~\ref{app:upperBound}). 
Similar to \ucb, at time $t$ we play the arm $\It$ with the highest upper 
bound $\It = \argmax_{k\in\allArms} \BBkt$. 

Since our setup has multiple fidelities associated with each arm, the 
algorithm needs to determine at each time $t$ which fidelity ($\mt$) to play the 
chosen arm ($\It$). 
For this consider an arbitrary fidelity $m<M$. 
The $\zetam$ conditions on $\mumk$ imply a constraint on the value of $\muMk$.
If, at fidelity $m$, the uncertainty interval  $\psiinv(\rho\logt/\TmIttmo)$ is large, 
then we have not constrained $\muMIt$ sufficiently well yet. 
There is more information to be gleaned about $\muMIt$ 
from playing the arm $I_t$ at fidelity $m$. 
On the other hand, playing at fidelity $m$ indefinitely will
not help us much since the $\zetam$ elongation of the confidence band caps off how
much we can learn about $\muMIt$ from fidelity $m$; i.e. even if we knew $\mumIt$, we
will have only constrained $\muMIt$ to within a $\pm\zetam$ interval.
Our algorithm captures this natural intuition. Having selected $\It$, we begin checking
at the first fidelity.
If $\psiinv(\rho\logt/\ToneIttmo)$ is smaller than a
threshold $\gammaone$ we proceed to check the second fidelity, continuing in a
similar fashion. 
If at any point  $\psiinv(\rho\logt/\TmIttmo) \geq\gammam$,
we play $\It$ at fidelity $\mt = m$. If we go all the way to fidelity
$M$, we play at $\mt=M$.
The resulting procedure is summarised below in Algorithm~\ref{alg:mfucb}.

\insertAlgoMFUCB

\textbf{Choice of $\gammam$:}
In our algorithm, we choose
\begin{align*}
\gammam = \psiinv\left( \frac{\costm}{\costmm{m+1}}\psi\big(\zetam\big)\right)
\numberthis \label{eqn:gammamdefns}
\end{align*}
To motivate this choice, note that 
if $\Deltamk = \muMopt-\mumk - \zetam > 0$ then we can conclude that arm $k$ is not
optimal. 
Step 2 of the algorithm attempts to eliminate arms for which 
$\Deltamk\gtrsim\gammam$ from plays above the $m$\ssth fidelity.
If $\gammam$ is too large, then we would not eliminate a sufficient number of arms
whereas if it was too small we could end up playing a suboptimal arm $k$ 
(for which $\mumk > \muMopt$) too many times at fidelity $m$.
As will be revealed by our analysis, the given choice represents an optimal tradeoff
under the given assumptions.


\section{Analysis}
\label{sec:analysis}
\label{sec:Rcanalysis}

We will be primarily concerned with the term
$\Rctilde(\COST,\Acal) = \Rctilde(\COST)$ from~\eqref{eqn:regretDefn}. 
$\rctilde(\COST,\Acal)$ is a residual term; 
it is an artefact of the fact that after the $N+1$\superscript{th} play, the spent
capital would have exceeded $\COST$. For any algorithm that operates oblivious to a 
fixed capital, it can be
bounded by $\costM\muMopt$ which is negligible compared to $\Rctilde(\COST)$.
According to the above, we have the following expressions for $\Rtilde(\COST)$:
\begin{align*}
\hspace{-0.05in}
\Rctilde(\COST) \;=\; 
\sum_{k\in\allArms} \DeltaMk \left(\hspace{-0.01in}\sum_{m=1}^M \costm \TmkN \right),
\numberthis \label{eqn:RsRcdefns}
\end{align*}

Central to our analysis will be the following partitioning of $\allArms$. 
First denote the set of arms whose fidelity $m$ mean is within $\eta$ of $\muMopt$ to
be $\Jcalme = \{k\in\allArms;\,\muMopt - \mumk \leq \eta\}$. 
Define $\Kcalone \triangleq \Jcalcmmee{1}{\zetaone + 2\gammaone}
=\{k\in\allArms;\,\Deltaonek > 2\gammaone\}$
to be the arms whose first fidelity mean
$\muonek$ is at least $\zetaone + 2\gammaone$ below the optimum $\muMopt$. Then
we recursively define,
\vspace{-0.05in}
\begingroup
\allowdisplaybreaks
\begin{align*}
&\Kcalm \triangleq \Jcalcmzg \cap \bigg(\bigcap_{\ell=1}^{m-1}\Jcallzg \bigg), 
\,\forall  m\hspace{-0.03in}\leq \hspace{-0.03in} M-1,
\hspace{0.18in} 
\KcalM \triangleq \complement{\optArms} \cap 
\bigg(\bigcap_{\ell=1}^{M-1}\Jcallzg \bigg). 
\end{align*}
\endgroup
Observe that for all $k\in\Kcalm$, $\Deltamk > 2\gammam$ and
$\Deltalk \leq 2\gammal$ for all $\ell<m$. 
For what follows, for any $k\in\Kcal$, 
$\kmap$ will denote the partition $k$ belongs to,
i.e. $\kmap=m$ s.t. $k\in\Kcalm$.
We will see that $\Kcalm$ are the arms that will be played at the $m$\ssth
fidelity but can be excluded from fidelities higher than $m$ using information at
fidelity $m$. 
See Fig.~\ref{fig:Kcalms} for an illustration of these partitions.

\insertFigSets

\subsection{Regret Bound for \mfucbs}
\label{sec:upperbound}

Recall that $N=\sum_{m=1}^M\TmN$ is the total (random) number of plays by a
multi-fidelity strategy within capital $\COST$.
Let $\nCOST=\floor{\COST/\costM}$ be the (non-random) number of plays by any strategy
that operates only on the highest fidelity. 
Since $\costm<\costM$ for all $m<M$,
$N$ could be large for an arbitrary multi-fidelity method. However, our analysis
reveals that for \mfucb, $N\lesssim\nCOST$ with high probability.
The following theorem bounds $\Rc$ for \mfucb. The proof is given in
Appendix~\ref{app:upperBound}. For clarity, we ignore the
constants but they are fleshed out in the proofs.

\begin{theorem}[Regret Bound for \mfucb]
\label{thm:RcBound}
Let $\rho>4$. There exists $\COST_0$ depending on $\costm$'s such that for 
all $\COST>\COST_0$, \emph{\mfucbs} satisfies,
\begin{align*}
\frac{\EE[\Rc(\COST)]}{\lognl}
\;\lesssim \quad  \sum_{k\notin\optArms} \DeltaMk \cdot
    \frac{\costkmap}{\psiDeltakmapkb}
\quad\asymp\quad
  \sum_{m=1}^M \sum_{k\in\Kcalm}
  \DeltaMk \frac{\costm}{\psiDeltamkb}  
\end{align*}
\end{theorem}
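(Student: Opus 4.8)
The plan is to bound, for every suboptimal arm $k$, the expected cost $\EE\big[\sum_{m=1}^M\costm T^{(m)}_{k,N}\big]$ it incurs across all fidelities, and then combine via~\eqref{eqn:RsRcdefns}; since $\DeltaMk=0$ for $k\in\optArms$ those arms drop out, and the two displayed expressions are the same sum reindexed by $\kmap=m\iff k\in\Kcalm$. First I would fix, for each $t$, the clean event $\mathcal{E}_t=\{\,|\Xbarmks-\mumk|\le\psiinv(\rho\log t/s)\text{ for all }m\le M,\,k\in\Kcal,\,s\le t\,\}$. By~\eqref{eqn:concIneq} and a union bound, $\PP(\mathcal{E}_t^c)\le 2\nu MK\,t^{1-\rho}$, so $\sum_t\PP(\mathcal{E}_t^c)=\bigO(1)$ once $\rho$ is large (the hypothesis $\rho>4$ leaves enough slack when the threshold $\rho\log t$ is allowed to grow with $t$ in the peeling below). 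On $\mathcal{E}_t$ the bounds are valid: $\BBmkTt\ge\mumk+\zetam\ge\muMk$ for every $m$ (using $|\muMk-\mumk|\le\zetam$), hence $\BBkt\ge\muMk$ for all $k$, so $\max_k\BBkt\ge\muMopt$ and a suboptimal $k$ is selected at $t$ only if $\BBkt\ge\muMopt$. Finally $N\le\COST/\costone$ deterministically, so one may work up to the fixed horizon $T_{\max}=\lceil\COST/\costone\rceil$, with $\log T_{\max}\lesssim\lognl$ for $\COST$ large.

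The heart of the proof is to establish, for a fixed $k\in\Kcalm$, the three play counts tabulated in Table~\ref{tb:upperbound}. \textbf{(i)} For any $\ell<M$, Step~2 plays $k$ at fidelity $\ell$ only when $\psiinv(\rho\log t/T^{(\ell)}_{k,t-1})\ge\gamma^{(\ell)}$, i.e.\ $T^{(\ell)}_{k,t-1}\le\rho\log t/\psi(\gamma^{(\ell)})$; hence unconditionally $T^{(\ell)}_{k,N}\le\rho\log T_{\max}/\psi(\gamma^{(\ell)})+1$. \textbf{(ii)} On $\mathcal{E}_t$, if $\It=k$ then $\muMopt\le\BBkt\le\BBmkTt\le\mumk+2\psiinv(\rho\log t/T^{(m)}_{k,t-1})+\zetam$, so $\psiinv(\rho\log t/T^{(m)}_{k,t-1})\ge\Deltamk/2$, and since $\Deltamk>2\gammam>0$ this is a genuine constraint: $T^{(m)}_{k,t-1}\le\rho\log t/\psiDeltamkb$. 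With the standard argument that a play violating this forces a concentration failure, together with $\sum_t\PP(\mathcal{E}_t^c)=\bigO(1)$, this gives $\EE[\TmkN]\le\rho\log T_{\max}/\psiDeltamkb+\bigO(1)$. \textbf{(iii)} The same inequality also shows $\psiinv(\rho\log t/T^{(m)}_{k,t-1})>\gammam$ whenever $k$ is selected, so Step~2 never advances past fidelity $m$; hence $k$ is played at a fidelity $>m$ only on $\mathcal{E}_t^c$, giving $\EE[T^{(\ell)}_{k,N}]=\bigO(1)$ for $\ell>m$. (The first visit to each fidelity, where $T=0$ and the bound reads $\infty$, is absorbed into the additive constants.)

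Combining these for $k\in\Kcalm$,
\begin{align*}
\EE\Big[\sum_{\ell=1}^{M}\lambda^{(\ell)}\,T^{(\ell)}_{k,N}\Big]
\;\lesssim\;\lognl\Big(\sum_{\ell=1}^{m-1}\frac{\lambda^{(\ell)}}{\psi(\gamma^{(\ell)})}+\frac{\costm}{\psiDeltamkb}\Big)+\bigO(1).
\end{align*}
By~\eqref{eqn:gammamdefns}, $\lambda^{(\ell)}/\psi(\gamma^{(\ell)})=\lambda^{(\ell+1)}/\psi(\zeta^{(\ell)})$, and as $\lambda^{(\ell+1)}\le\costm$ for $\ell<m$, Assumption~\ref{asm:decayCondn} gives $\sum_{\ell<m}\lambda^{(\ell)}/\psi(\gamma^{(\ell)})\le\costm\sum_{\ell<m}1/\psi(\zeta^{(\ell)})\lesssim\costm/\psi(\zetamm{m-1})$. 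On the other hand $k\in\Kcalm$ forces $\Delta^{(m-1)}_k\le2\gamma^{(m-1)}\le2\zetamm{m-1}$, which with the bias bounds $|\muMk-\mumk|\le\zetam$ gives $\Deltamk\lesssim\zetamm{m-1}$ and hence $\psiDeltamkb\lesssim\psi(\zetamm{m-1})$; so the lower-fidelity terms are dominated by $\costm/\psiDeltamkb$. Summing over $k\notin\optArms$ with weights $\DeltaMk$ and adding the $\bigO(1)$ residuals gives $\EE[\Rctilde(\COST)]\lesssim\lognl\sum_{k\notin\optArms}\DeltaMk\,\costkmap/\psiDeltakmapkb$, and the second, equivalent form follows by grouping the arms according to $\Kcalm$.

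The step I expect to be most delicate is \textbf{(iii)} together with its bookkeeping: one must verify that plays genuinely cascade up through exactly fidelities $1,\dots,\kmap$ and never leak higher, reconcile the growing threshold $\rho\log t$ with the random stopping time $N$ (handled by passing to $T_{\max}$), and convert the clean-event analysis into expectations with only $\bigO(1)$ slack uniformly over all arms and fidelities — this is precisely where the margin $\rho>4$ is consumed.
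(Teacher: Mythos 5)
Your proof is correct, and its core coincides with the paper's Lemma~\ref{lem:EETmknBounds}: for $k\in\Kcalm$, the switching rule caps plays at each fidelity $\ell<m$ at $\rho\log t/\psi(\gammal)+1$, the standard UCB argument caps fidelity-$m$ plays at $\lesssim\log t/\psiDeltamkb$, and on the clean event the uncertainty $\psiinv(\rho\log t/\Tmktmo)$ stays above $\Deltamk/2>\gammam$ whenever $k$ is still being selected, so plays above fidelity $m$ occur only on concentration failures and contribute $\bigO(1)$; your telescoping of the lower-fidelity costs via $\costl/\psi(\gammal)=\costmm{\ell+1}/\psi(\zetal)$ together with Assumption~\ref{asm:decayCondn} also matches the paper's final step. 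The genuine difference is in how the random number of plays $N$ is handled. You pass to the deterministic horizon $T_{\max}=\lceil\COST/\costone\rceil$ (legitimate, since play counts are monotone in $t$ and $N\leq T_{\max}$ almost surely, which also sidesteps the paper's slightly awkward conditioning on $\{N=n\}$), paying an extra additive $\log(\costM/\costone)$ in $\EE[\log N]$ that the $\lesssim$ and the $\COST>\COST_0$ hypothesis absorb. The paper instead proves, via the high-probability bounds of Lemma~\ref{lem:PPTmknBounds}, that $N\leq 2\nCOST$ with probability at least $1-1/\lognl$, giving $\EE[\log N]\leq\lognl+1+\log 2$; this is precisely where the hypothesis $\rho>4$ is consumed, so on your route only $\rho>2$ is needed and the stated condition holds a fortiori, at the price of a constant that degrades with the cost ratio $\costM/\costone$. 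Two minor points to tighten: your global clean event pays an extra factor of $K$ in the union bound relative to the paper's per-arm events (harmless, it only inflates the $\bigO(1)$ residuals), and the step $\psiDeltamkb\lesssim\psi(\zetamm{m-1})$ requires a doubling property $\psi(cx)\lesssim\psi(x)$, which the paper also uses implicitly and which holds in the sub-Gaussian case $\psi(x)\asymp x^2$.
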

Let us compare the above bound to \ucbs whose regret is 
$\frac{\EE[R(\COST)]}{\lognl}\asymp\sum_{k\notin\optArms} 
\DeltaMk \frac{\costM}{\psiDeltaMkb}$.
We will first argue that \mfucbs does not do significantly worse than \ucbs in the 
worst case.
Modulo the $\DeltaMk\lognl$ terms, regret for \mfucbs due to arm $k$ is
$\Rckmfucb \asymp \costkmap/\psiDeltakmapkb$. 
Consider any $k\in\Kcalm$, $m<M$ for which $\Deltamk > 2\gammam$. 
Since 
\[\DeltaMk \leq\; \Deltakmapk + 2\zetakmap \;\lesssim\;
\psiinv\Big(\frac{\costmm{\kmap+1}}{\costkmap}\psiDeltakmapkb\Big),
\] 
a (loose) lower
bound for \ucbs for the same quantity is 
$\Rckucb \asymp \costM/\psiDeltaMkb \gtrsim
 \frac{\costM}{\costmm{\kmap+1}}\Rckmfucb$.
Therefore for any $k\in \Kcalm, m<M$, \mfucbs is at most a constant times worse than
\ucb.
However, whenever $\Deltakmapk$ is comparable to or larger than $\DeltaMk$, \mfucbs
outperforms \ucbs by a factor of $\costkmap/\costM$ on arm $k$.
As can be inferred from the theorem, most of the cost invested by \mfucbs on arm $k$
is at the $\kmap$\ssth fidelity.
For example, in Fig.~\ref{fig:Kcalms},  \mfucbs would not play 
the yellow arms $\Kcalone$ beyond the first fidelity (more than a 
constant number of times).
Similarly all green and red arms are played mostly at the second and third
fidelities respectively.
Only the blue arms are played at
the fourth (most expensive) fidelity. On the other hand \ucbs plays all
 arms at the fourth fidelity.
Since lower fidelities are cheaper \mfucbs achieves better regret than \ucb.

It is essential to note here that $\DeltaMk$ is small for arms in in $\KcalM$.
These arms  are close to the optimum and require
more effort to distinguish than arms that are far away.
\mfucb, like \ucbs, invests $\lognl\costM/\psiDeltaMkb$ capital in those arms.
That is, the multi-fidelity setting does not help us significantly with the
``hard-to-distinguish'' arms.
That said, in cases where $K$ is very large and the sets $\KcalM$ is small the bound
for \mfucbs can be appreciably better than \ucb.

%
%

\subsection{Lower Bound}
\label{sec:lowerbound}

Since, $N\geq \nCOST = \floor{\COST/\costM}$, any multi-fidelity strategy which 
plays a suboptimal arm a polynomial number of times at any fidelity
after $n$ time steps, 
will have worse regret than \mfucbs (and \ucb).
Therefore, in our lower bound we will only consider strategies
which satisfy the following condition.

\begin{assumption}
Consider the strategy after $n$ plays at any fidelity.
For any arm with $\DeltaMk>0$, 
we have 
$\EE[\sum_{m=1}^M\Tmkn] \in \littleO(n^a)$ for any  $a>0$ \label{asm:lbAsm}.
\end{assumption}

For our lower bound 
we will consider a set of Bernoulli distributions $\Pmk$ for each fidelity $m$ 
and each arm $k$ with mean $\mumk$.
It is known that for Bernoulli distributions
$\psi(\epsilon)\in\bigTheta(\epsilon^2)$~\citep{wasserman10allOfStat}.
To state our lower bound we will further partition the set $\Kcalm$ into
two sets $\Kcalmone, \Kcalmtwo$ as follows,
\begin{align*}
\Kcalmone = \{k\in\Kcalm\;: \Deltalk \leq 0\; \forall \ell < m\},
\hspace{0.3in}
\Kcalmtwo 
= \{k\in\Kcalm\;: \exists\,\ell<m\,\; \suchthat\; \Deltalk > 0\}.
\end{align*}
For any $k\in\Kcalm$ our lower bound, given below, is different depending on
which set $k$ belongs to. 
\begin{theorem}[Lower bound for $R(\COST)$]
Consider any set of Bernoulli reward distributions with $\muMopt\in(1/2, 1)$ and
$\zetaone < 1/2$. Then, for any strategy satisfying Assumption~\ref{asm:lbAsm} the
following holds.
\begin{align}
\liminf_{\COST\rightarrow\infty} \frac{\EE[\Rc(\COST)]}{\lognl}
\;\geq\;\;\; c\cdot\, \sum_{m=1}^M \left[
\sum_{k\in\Kcalmone} \DeltaMk\frac{\costm}{\Deltamksq} 
\;+\; \sum_{k\in\Kcalmtwo} \DeltaMk\min_{\ell \in \Lcalmk} 
\frac{\costl}{\Deltalksq}
\right]
\end{align}
Here $c$ is a problem dependent constant.
$\Lcalmk = \{\ell<m: \Deltalk>0\}\cup\{m\}$ is the union of the $m$\ssth
fidelity and all fidelities smaller than $m$ for which $\Deltalk>0$.
\label{thm:Rclb}
\end{theorem}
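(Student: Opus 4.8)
The plan is to reduce the multi-fidelity lower bound to the classical change-of-measure argument of Lai--Robbins, applied separately to each arm $k\notin\optArms$ and, crucially, at each fidelity $m$ that a reasonable strategy could plausibly use to eliminate $k$. Fix a suboptimal arm $k\in\Kcalm$. By~\eqref{eqn:RsRcdefns}, the contribution of $k$ to $\Rctilde(\COST)$ is $\DeltaMk\sum_{\ell=1}^M\costl\,\Tlkn[\nCOST]$, so it suffices to lower bound $\EE[\sum_\ell\costl\Tlkn]$, and in fact to show that $k$ must be played $\Omega(\lognl/\Deltalksq)$ times at the ``cheapest informative'' fidelity $\ell\in\Lcalmk$. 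The key observation is that a strategy cannot certify $k$ is suboptimal at a fidelity $\ell$ with $\Deltalk\le 0$: the $\zeta$-constraint $|\muMk-\mulk|\le\zetal$ is consistent with $k$ being optimal when $\mulk\ge\muMopt$, so no amount of fidelity-$\ell$ sampling rules $k$ out. Hence the only fidelities that ``help'' are those in $\Lcalmk=\{\ell<m:\Deltalk>0\}\cup\{m\}$ (for $k\in\Kcalmtwo$) or just $\{m\}$ (for $k\in\Kcalmone$, where by definition $\Deltalk\le0$ for all $\ell<m$).

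First I would fix a strategy satisfying Assumption~\ref{asm:lbAsm} and a target arm $k\notin\optArms$. For each $\ell\in\Lcalmk$ I construct an alternative Bernoulli instance $\Pmk'$ that differs from the original only in the fidelity-$\ell$ distribution of arm $k$: raise $\mulk$ slightly past the threshold so that in the new instance arm $k$ becomes optimal (concretely, so that $\muMk' = \mu^{(\ell)\prime}_k - \zetal + \text{(small)} > \muMopt$ while keeping all constraints $|\muMk'-\mu^{(i)\prime}_k|\le\zeta_i$ satisfiable — this is where $\muMopt\in(1/2,1)$ and $\zetaone<1/2$ are used to guarantee room). Under the new instance, Assumption~\ref{asm:lbAsm} applied to the now-suboptimal arm $\kopt$ forces $\EE'[\sum_m T^{(m)}_{\kopt}(n)]=o(n^a)$, so in particular arm $\kopt$ (and symmetrically arm $k$ under the old instance) is played $n-o(n^a)$ times total. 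The standard divergence decomposition gives $\KL(\mathbb{P}_n\|\mathbb{P}'_n) = \EE[\Tlkn]\cdot\kl(\mulk\,\|\,\mu^{(\ell)\prime}_k)$, since the instances differ only in arm $k$'s fidelity-$\ell$ law; combining with the Bretagnolle--Huber / high-probability-event inequality on the event $\{\sum_m T^{(m)}_k(n)\le n^{1/2}\}$ (which has probability $\to1$ under the old instance by Assumption~\ref{asm:lbAsm} and probability $\to$ something bounded away from $1$ under the new instance since $k$ is optimal there) yields $\EE[\Tlkn]\gtrsim \lognl/\kl(\mulk\|\mu^{(\ell)\prime}_k)\asymp \lognl/\Deltalksq$, using $\psi(\epsilon)\in\Theta(\epsilon^2)$ for Bernoulli and choosing the perturbation of size $\Theta(\Deltalk)$.

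Having shown $\EE[\Tlkn]\gtrsim\lognl/\Deltalksq$ for \emph{each} $\ell\in\Lcalmk$ individually, I note the total cost spent on arm $k$ is $\sum_\ell\costl\EE[\Tlkn]\ge \max_{\ell\in\Lcalmk}\costl\EE[\Tlkn]$; but since the strategy only needs to be informative at \emph{one} such fidelity, the adversary's bound is $\min_{\ell\in\Lcalmk}\costl/\Deltalksq$ up to constants — more precisely, one argues that $\sum_{\ell\in\Lcalmk}\costl\EE[\Tlkn]\gtrsim \lognl\cdot\min_{\ell\in\Lcalmk}\costl/\Deltalksq$ because the sum of the individual lower bounds $\sum_\ell \costl\lognl/\Deltalksq$ dominates its smallest term, and conversely no better bound is possible (the matching upper bound would come from a strategy playing only at the minimizing fidelity). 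For $k\in\Kcalmone$ the set $\Lcalmk$ collapses to $\{m\}$ and this reads $\costm/\Deltamksq$, matching the first sum in the statement. Summing the per-arm bounds over $k\notin\optArms$, regrouping by the partition $\Kcalm=\Kcalmone\cup\Kcalmtwo$, and finally converting from $n=\nCOST=\floor{\COST/\costM}$ plays to capital $\COST$ (absorbing the $\log(\COST/\costM)$ vs $\log n$ discrepancy into the constant $c$ and the $\liminf$), gives the claimed bound.

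The main obstacle is the simultaneous-perturbation / book-keeping step: a single change-of-measure argument cleanly lower bounds $\EE[\Tlkn]$ for one fidelity $\ell$, but the theorem's $\min_{\ell\in\Lcalmk}$ structure (and the fact that for $k\in\Kcalmtwo$ there are genuinely several candidate fidelities) means I must be careful that the alternative instances for different $\ell$ are mutually consistent with all the $\zeta$-constraints and that Assumption~\ref{asm:lbAsm} can be invoked uniformly; getting the constant $c$ to be genuinely problem-independent-of-$n$ (but allowed to depend on the $\muMk,\zetam,\costm$) while handling the $\liminf_{\COST\to\infty}$ and the floor function requires the usual asymptotic care. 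I expect the $\zeta$-feasibility of the perturbed instance — showing one can push $\mulk$ past the optimality threshold while keeping $|\muMk'-\mu^{(i)\prime}_k|\le\zeta_i$ for all $i$ with only an $O(\Deltalk)$ change — to be the delicate technical point, and this is exactly what the hypotheses $\muMopt\in(1/2,1)$, $\zetaone<1/2$ are there to enable.
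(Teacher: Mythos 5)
Your overall architecture (a change of measure that makes arm $k$ optimal in an alternative instance, invoking Assumption~\ref{asm:lbAsm} in that instance, and restricting attention to the fidelities in $\Lcalmk$) matches the paper's, but the central step is carried out incorrectly, and the per-fidelity bound it is meant to produce is false. You propose, for each $\ell\in\Lcalmk$ separately, an alternative instance that ``differs from the original only in the fidelity-$\ell$ distribution of arm $k$'' and conclude $\EE[\Tlkn]\gtrsim \lognl/\Deltalksq$ for each such $\ell$ individually. Two problems. First, such an instance cannot exist: whether $k$ is optimal is determined by $\muMk$, so to make $k$ optimal you must move $\muMk$ above $\muMopt$, and the constraints $|\muMk-\mu^{(i)}_k|\le\zeta_i$ then force every fidelity $i$ with $\mu^{(i)}_k<\muMopt-\zeta_i$ to move as well (your own description concedes this by writing $\muMk'>\muMopt$). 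Consequently the divergence decomposition is not the single term $\EE[\Tlkn]\cdot\mathrm{KL}(\cdot\|\cdot)$ you write, but a sum of such terms over all perturbed fidelities. Second, even the conclusion is false: a strategy that runs \ucbs purely at fidelity $M$ satisfies Assumption~\ref{asm:lbAsm} and plays arm $k$ zero times at every $\ell<M$, so no individual lower bound on $\EE[\Tlkn]$ for $\ell<M$ can hold. This is precisely why the theorem states a $\min$ over $\Lcalmk$ rather than a sum or a max; your step ``the sum of the individual lower bounds dominates its smallest term'' is an attempt to recover a $\min$ from nonexistent individual bounds and papers over the gap.

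The paper's proof instead perturbs all fidelities $\ell\ge p$ of arm $k$ \emph{simultaneously} (setting $\mutildelk=\mutildepk+\zetap-\zetal$ for $\ell>p$, which is where feasibility of the $\zeta$-constraints and the hypotheses $\muMopt\in(1/2,1)$, $\zetaone<1/2$ are actually used). The change of measure then controls only the \emph{sum} of the per-fidelity log-likelihood-ratio contributions, and the conclusion is a dichotomy: the event that $\Tlkn$ is below its threshold $f_\ell(n)$ for \emph{every} $\ell$ has vanishing probability under the original instance, hence \emph{some} fidelity must be played a lot, which costs at least $\min_\ell \costl f_\ell(n)$. The uninformative fidelities (those with $\Deltalk\le 0$) are removed from the $\min$ by assigning them thresholds $C\log n$ with $C$ large, not by arguing they cannot help. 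If you rework your argument around a single simultaneous perturbation and this ``at least one count is large'' dichotomy, the remainder of your outline (eliminating $\ell>m$ via the definition of $\Kcalm$, regrouping over $\Kcalmone,\Kcalmtwo$, and converting $n$ to $\nCOST$) goes through essentially as in the paper.
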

Comparing this with Theorem~\ref{thm:RcBound} we find that \mfucbs meets the lower
bound on all arms $k\in\Kcalmone,\;\forall m$. However, it may be loose on
any  $k\in\Kcalmtwo$.
The gap can be explained as follows. For $k\in\Kcalmtwo$, there exists
some $\ell<m$ such that $0<\Deltalk<2\gammal$.
As explained previously, the switching criterion of \mfucbs ensures that we do not 
invest too much effort
trying to distinguish whether $\Deltalk <0$ since $\Deltalk$ could be very small.
That is, we proceed to the next fidelity only if we cannot conclude
$\Deltalk\lesssim \gammal$. However, since $\costm>\costl$ it might be the case that
$\costl/\Deltalksq  < \costm/\Deltamksq$ even though $\Deltamk>2\gammam$.
Consider for example a two fidelity problem where 
$\Delta = \Deltaonek=\Deltatwok< 2\sqrt{\costone/\costtwo}\zetaone$.
Here it makes sense to distinguish the arm as being suboptimal at the
first fidelity with $\costone\lognl/\Delta^2$ capital instead of 
$\costtwo\lognl/\Delta^2$ at the second fidelity. However, \mfucbs distinguishes this
arm at the higher fidelity as $\Delta<2\gammam$ and therefore does not meet the lower
bound on this arm.
While it might seem tempting to
switch based on estimates for $\Deltaonek,\Deltatwok$, this idea is not desirable
as estimating $\Deltatwok$ for an arm requires
$\lognl/\psi(\Deltatwok)$ samples at the second fidelity; this is is exactly what we
are trying to avoid for the majority of the arms via the multi-fidelity setting.
We leave it as an open problem to resolve this gap.

%
%
%


\section{Proof Sketches}
\label{sec:sketches}

\subsection{Theorem~\ref{thm:RcBound}}
\vspace{-0.02in}
First we analyse \mfucbs after $n$ plays (at any fidelity) and control the
number of plays of an arm at various fidelities depending on which $\Kcalm$ it
belongs to.
To that end we prove the following.
\begin{lemma}\textbf{\emph{(Bounding $\EE[\Tmkn]$ -- Informal)}}
\label{lem:EETmknBounds}
After $n$ time steps of \emph{\mfucbs} for any $k\in\Kcal$,
\begin{align*}
\Tlkn \lesssim \frac{\logn}{\psigammam},\;\;\forall\,\ell<\kmap,\hspace{0.4in} 
\EE[\Tkmapkn] \lesssim \frac{\logn}{\psi(\Deltakmapk/2)}, \hspace{0.4in}
\EE[\Tgkmapkn] \leq \bigO(1). 
\end{align*}
\end{lemma}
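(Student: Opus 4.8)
The plan is to reduce everything to two ingredients: the purely deterministic switching rule (Step~2 of \mfucb) and a one-line ``optimism'' estimate controlled by the concentration bound~\eqref{eqn:concIneq}. Fix $k\notin\optArms$ and write $m=\kmap$, so that by the definition of the partitions $\Deltamk>2\gammam$ while $\Delta_{\ell k}\le 2\gamma_\ell$ for all $\ell<m$. \textbf{The low fidelities $\ell<m$ are deterministic.} \mfucbs plays arm $k$ at a fidelity $\ell$ at a time $t$ only if the switching rule did not advance past $\ell$, i.e.\ only if $\psiinv(\rho\logt/s_\ell)\ge\gamma_\ell$, where $s_\ell$ is the current fidelity-$\ell$ play count of arm $k$; equivalently $s_\ell\le\rho\logt/\psi(\gamma_\ell)$. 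Applying this at the last time $t\le n$ that such a play occurs yields $\Tlkn\le\rho\logn/\psi(\gamma_\ell)+1$ deterministically, for every $\ell<M$ — this is the first claim.

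\textbf{A good event.} For $t$ fixed, let $\mathcal{G}_t$ be the event that $\Xbarmks<\mumk+\psiinv(\rho\logt/s)$ for all $s\le t$, and that $\bar X^{(j)}_{\kopt}(s)>\mu^{(j)}_{\kopt}-\psiinv(\rho\logt/s)$ for every fidelity $j$ and all $s\le t$. Applying~\eqref{eqn:concIneq} with $\epsilon=\psiinv(\rho\logt/s)$ (so that $s\psi(\epsilon)=\rho\logt$) and a union bound over the $M$ fidelities and the at most $t$ values of $s$ gives $\PP(\complement{\mathcal{G}_t})\le(M+1)\nu\,t^{1-\rho}$, which is summable over $t$ for $\rho>2$ (in particular for $\rho>4$); hence $\EE\!\big[\sum_{t=1}^n\mathbbm{1}[\complement{\mathcal{G}_t}]\big]=\bigO(1)$. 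On $\mathcal{G}_t$ two facts hold: (a) since $\mu^{(j)}_{\kopt}+\zeta_j\ge\muMopt$ for every $j$, the \mfucbs index of arm $\kopt$ is at least $\muMopt$; and (b) if arm $k$'s fidelity-$m$ count $s$ satisfies $s>\rho\logt/\psi(\Deltamk/2)$, then $\psiinv(\rho\logt/s)<\Deltamk/2$, and therefore
\[
\BBkt\;\le\;\BBmkst\;=\;\Xbarmks+\psiinv\!\big(\tfrac{\rho\logt}{s}\big)+\zetam\;<\;\mumk+2\,\psiinv\!\big(\tfrac{\rho\logt}{s}\big)+\zetam\;<\;\mumk+\Deltamk+\zetam\;=\;\muMopt,
\]
so the index of $\kopt$ strictly exceeds $\BBkt$ and arm $k$ is not selected: $\It\ne k$.

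\textbf{Fidelity $m$ and above.} If arm $k$ is played at fidelity $m$ at time $t$, then $\It=k$ and the switching rule stopped at $m$, forcing its fidelity-$m$ count $s$ to satisfy $s\le\rho\logt/\psi(\gammam)$. Split according to whether $s\le\rho\logt/\psi(\Deltamk/2)$: such time steps number at most $\rho\logn/\psi(\Deltamk/2)+1$ (monotonicity of the count, exactly as in the first step), while in the complementary window $\rho\logt/\psi(\Deltamk/2)<s\le\rho\logt/\psi(\gammam)$ — non-empty precisely because $\Deltamk>2\gammam$ — part~(b) shows $\{\It=k\}$ can occur only on $\complement{\mathcal{G}_t}$. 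Taking expectations, $\EE[\Tkmapkn]\le\rho\logn/\psi(\Deltamk/2)+1+\bigO(1)\lesssim\logn/\psi(\Deltamk/2)$. Similarly, if arm $k$ is played at some fidelity $j>m$ at time $t$, then $\It=k$ and the switching rule passed fidelity $m$, i.e.\ $s>\rho\logt/\psi(\gammam)\ge\rho\logt/\psi(\Deltamk/2)$ (using $\gammam<\Deltamk/2$), so by part~(b) this forces $\complement{\mathcal{G}_t}$; hence $\Tgkmapkn\le\sum_{t=1}^n\mathbbm{1}[\complement{\mathcal{G}_t}]$ and $\EE[\Tgkmapkn]=\bigO(1)$.

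\textbf{Main obstacle.} There is no genuine obstacle beyond careful bookkeeping: the good event $\mathcal{G}_t$ must be taken uniformly over all fidelities and all play counts $s\le t$ simultaneously, and one has to verify that the resulting $t^{1-\rho}$ tail is summable — this is exactly where $\rho>2$ (and, in the full proof of Theorem~\ref{thm:RcBound}, $\rho>4$) enters. Everything else reduces to the deterministic switching rule plus the single displayed inequality above; the three bounds differ only in which regime of $s$ — ``small'', ``in the window $(\rho\logt/\psi(\Deltamk/2),\,\rho\logt/\psi(\gammam)]$'', or ``large'' — is being counted.
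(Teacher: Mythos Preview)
Your proof is correct and follows essentially the same approach as the paper: a deterministic argument from the switching rule for the low fidelities, and for $\ell\ge\kmap$ the standard decomposition into ``count below threshold'' (at most $\rho\logn/\psi(\Deltamk/2)$ plays) versus ``count above threshold'' (which forces either the arm-$k$ UCB at fidelity $m$ to overshoot or the arm-$\kopt$ UCB to undershoot, each with summable probability $\lesssim t^{1-\rho}$). Your packaging via a single good event $\mathcal{G}_t$ is slightly more streamlined than the paper's explicit indicator splitting, but the underlying argument and constants are the same.
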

The bounds above are illustrated in Table~\ref{tb:upperbound}.
Let $\Rtildek(\COST) = \sum_{m=1}^M\costm\DeltaMk\TmkN$ be the regret incurred due 
to arm $k$ and $\Rtildekn = \EE[\Rtildek(\COST)|N=n]$.
Using Lemma~\ref{lem:EETmknBounds} we have,
\begin{align*}
\frac{\Rtildekn}{\DeltaMk\logn} \;\lesssim\;
\sum_{\ell=1}^{\kmap -1}\frac{ \costl}{\psigammam}
\;\;+\;\; \frac{\costkmap}{\psiDeltakmapk}
\;+\;\littleO(1)
\numberthis
\label{eqn:RtildenBound}
\end{align*}
The next step will be to control the number of plays $N$ within capital $\COST$ which
will bound $\EE[\logN]$.
While $\COST/\costone$ is an easy bound, we will see that for
\mfucb, $N$ will be on the order of $\nCOST =\COST/\costM$.
For this we will use the following high probability bounds on $\Tmkn$.

\insertEETmknTable

\begin{lemma}
\label{lem:PPTmknBounds}\textbf{\emph{(Bounding $\PP(\Tmkn>\cdot\;)$ -- Informal)}}
After $n$ time steps of \emph{\mfucbs} for any $k\in\Kcal$, 
\begin{align*}
\PP\left(\Tkmapkn \;\gtrsim  \;x\cdot \frac{\logn}{\psiDeltakmapk}  \right)
\;\lesssim\; \frac{1}{n^{x\rho-1}}\,,  \hspace{0.4in}
\PP\left(\Tgkmapkn >  x \right) \;\lesssim\;
  \frac{1}{x^{\rho -2}}.
\end{align*}
\end{lemma}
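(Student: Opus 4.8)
The plan is to establish these as the tail-probability counterparts of Lemma~\ref{lem:EETmknBounds}, by sharpening the counting argument behind that lemma until it yields concentration rather than only a bound in expectation. Write $m=\kmap$ and $\Delta=\Deltakmapk$. Call $\{B_{\kopt}(t)\ge\muMopt\}$ — the event that the upper confidence index of an optimal arm dominates $\muMopt$ at time $t$ — the \emph{clean event at $t$}. By the $\zetam$-closeness hypothesis $|\muMopt-\mummkk{m'}{\kopt}|\le\zetamm{m'}$, the clean event holds whenever, for every fidelity $m'$ that $\kopt$ has already been played at, its empirical mean over the corresponding $s'$ samples exceeds $\mummkk{m'}{\kopt}-\psiinv(\rho\log t/s')$; by~\eqref{eqn:concIneq} this fails with probability at most $M\nu\,t^{1-\rho}$. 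I will pair it with the analogous concentration statement for arm $k$ itself at fidelity $m$, namely $\Xbarmks-\mumk\le\psiinv(\rho\log t/s)$, which fails with probability at most $\nu t^{-\rho}$.

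For the first bound, set $u_x=\lceil x\rho\logn/\psi(\Delta/2)\rceil$, which is $\asymp x\logn/\psiDeltakmapk$. The key deterministic step is: if at a time $t\le n$ the clean event holds, arm $k$ is played at fidelity $m$, and the concentration statement for $k$ holds with $s=T^{(m)}_k(t-1)\ge u_x$, then, using also $\log t\le\logn$ and the defining property $2\psiinv(\rho\logn/u_x)\le\Delta$ of $u_x$, one gets $\BBkt\le\BBmkst\le\mumk+\Delta+\zetam=\muMopt\le B_{\kopt}(t)$, contradicting $\It=k$. Consequently $\{\Tkmapkn>u_x\}$ forces that, at the time $t_0\le n$ of the $(u_x+1)$-st play of $k$ at fidelity $m$, either the clean event at $t_0$ fails, or the concentration statement for $k$ fails with $s=u_x$; in the latter case, reading the same chain of inequalities backwards upgrades this to a deviation $\Xbarmks-\mumk\ge\Delta/2$, which by~\eqref{eqn:concIneq} has probability at most $\nu e^{-u_x\psi(\Delta/2)}\le\nu n^{-x\rho}$. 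Summing the clean-event failure probability over the $O(Mn)$ relevant (fidelity, sample-count) pairs contributes a lower-order term, and with $\rho>4$ the total is $\lesssim n^{-(x\rho-1)}$.

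For the second bound the extra ingredient is the switching rule (Step~2 of Algorithm~\ref{alg:mfucb}): arm $k$ is played at a fidelity strictly above $m=\kmap$ at time $t$ only if fidelity $m$ is skipped, i.e.\ $\psiinv(\rho\log t/s)<\gamma_{\kmap}$ with $s=T^{(m)}_k(t-1)$, that is $s>\rho\log t/\psi(\gamma_{\kmap})$. Combining this with the clean event at $t$ and with the defining inequality $\Deltamk>2\gamma_{\kmap}$ of $k\in\Kcalm$, the same manipulation gives $\muMopt\le\BBkt\le\BBmkst<\Xbarmks+\gamma_{\kmap}+\zetam$, forcing $\Xbarmks-\mumk>\Deltamk-\gamma_{\kmap}>\gamma_{\kmap}$ unless the clean event fails. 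The probability that $\Xbarmks-\mumk>\gamma_{\kmap}$ for some $s>\rho\log t/\psi(\gamma_{\kmap})$ is at most $\sum_s\nu e^{-s\psi(\gamma_{\kmap})}\lesssim\nu t^{-\rho}$ (a geometric series), so, adding the clean-event term, any play above fidelity $m$ at time $t$ requires an event of probability $\lesssim t^{1-\rho}$. Since $\{\Tgkmapkn>x\}$ forces such an event at a time $t>x$, a union bound gives $\PP(\Tgkmapkn>x)\le\sum_{t>x}O(t^{1-\rho})\lesssim x^{-(\rho-2)}$.

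The per-step manipulations above are routine; the delicate part is extracting the right exponents from the union bounds, and I expect that to be the main obstacle. Two points are essential. First, in the $\Tkmapkn$ bound one must charge the empirical deviation of arm $k$ at the \emph{full} gap $\Delta/2$ rather than at the much smaller instantaneous width $\psiinv(\rho\log t_0/u_x)$ — this is exactly what the test $\It=k$ together with the clean event delivers, and it is why $u_x$ is built from $\psi(\Delta/2)$ and not from $\psi(\gamma_{\kmap})$ (the latter would give only the weaker deterministic cap $T^{(m)}_k(n)\lesssim\logn/\psi(\gamma_{\kmap})$ at that fidelity). Second, one must ensure that the clean-event failure probability, once summed over time and over the possible sample counts of $\kopt$, does not overwhelm the main terms; this is precisely where the hypothesis $\rho>4$ of Theorem~\ref{thm:RcBound} is needed.
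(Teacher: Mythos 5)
Your argument follows the paper's proof (Appendix~\ref{app:PPTmknBounds}) in all essentials: the same deterministic implication combining the UCB comparison against an optimal arm with, for the second bound, the switching rule and the defining inequality $\Deltamk>2\gammam$ of $\Kcalm$; the same deviation events at thresholds $\Deltamk/2$ and $\Deltamk-\gammam>\gammam$; and the same union bounds. Your derivation of $\PP(\Tgkmapkn>x)\lesssim x^{-(\rho-2)}$ is correct and matches the paper's, and your observation that one must charge arm $k$'s deviation at the full gap $\Deltamk/2$ rather than at $\gammam$ is exactly the right point.

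The one genuine problem is the final accounting for the first bound. The clean-event failure probability, unioned over the possible values of the (random) time $t_0$ of the $(u_x+1)$-st fidelity-$m$ play, is $\sum_{t>u_x}M\nu t^{1-\rho}\asymp(x\logn)^{2-\rho}$. For fixed $x$ this is only polylogarithmically small in $n$; it is \emph{not} lower-order relative to $n^{-x\rho}$, so the total cannot be $\lesssim n^{-(x\rho-1)}$ as you claim. The formal version, Lemma~\ref{lem:PPTmknBoundsFull}, makes this explicit: the bound is $\nu\kappamkrho/(x\logn)^{\rho-1}+\nu/n^{x\rho-1}$, i.e.\ it necessarily carries such a polylog term, which the informal display in the main text silently drops. (The paper gains one extra power over your union bound via the peeling in~\eqref{eqn:technicalOne}: instead of requiring the optimal arm's index to exceed $\muMopt$ at every time, it requires $\BBmkoptsups>\muMopt$ for each sample count $s$ at the earliest admissible time $u+s$, so the failure probabilities $(u+s)^{-\rho}$ sum to $u^{1-\rho}$ rather than $u^{2-\rho}$.) Your cruder exponent $\rho-2$ would still suffice downstream — in the proof of Lemma~\ref{lem:ERCOSTBound} one would take $x\asymp n^{2/(\rho-2)}$ instead of $n^{2/(\rho-1)}$, which remains sublinear for $\rho>4$ — but the two-term form of the bound must be retained. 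Relatedly, $\rho>4$ is not what controls this term: the lemma itself needs only $\rho>2$, and $\rho>4$ enters only later, to keep $\sum_{m<M}\Tmn$ sublinear once these tail bounds are plugged in.
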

We bound the number of plays at fidelities less than $M$ via
Lemma~\ref{lem:PPTmknBounds} and obtain
$n/2 > \sum_{m=1}^{M-1}\Tmn$ with probability  greater than, say
$\delta$, for all $n\geq \nzero$.
By setting $\delta=1/\log(\COST/\costone)$, we get
$\EE[\log(N)] \lesssim \log(\nCOST)$.
The actual argument is somewhat delicate since 
$\delta$ depends on $\COST$.

This gives as an expression for the regret due to arm $k$
to be of the form~\eqref{eqn:RtildenBound} where $n$ is replaced by
$\nCOST$. Then we
we argue that the regret incurred by an arm $k$ at fidelities less than $\kmap$
(first term in the RHS of~\eqref{eqn:RtildenBound})
is dominated by $\costkmap/\psi(\Deltakmapk)$ (second term). 
This is possible due to the design of the sets $\Kcalm$
 and Assumption~\ref{asm:decayCondn}.
While Lemmas~\ref{lem:EETmknBounds},~\ref{lem:PPTmknBounds} require only $\rho>2$,
 we need $\rho>4$ to ensure that $\sum_{m=1}^{M-1}\Tmn$
remains sublinear when we plug-in the probabilities from
Lemma~\ref{lem:PPTmknBounds}.
$\rho>2$ is attainable with a more careful design of the sets $\Kcalm$.
The $\COST>\COSTz$ condition is needed because initially \mfucbs is playing at
lower fidelities and for small $\COST$, $N$ could be much larger than $\nCOST$.

\subsection{Theorem~\ref{thm:Rclb}}

First we show that for an arm $k$ with $\Deltapk>0$ and $\Deltalk\leq 0$
for all $\ell<p$, any strategy should satisfy
\[
R_k(\COST)
\;\gtrsim\;\; \lognl \,\DeltaMk
\bigg[\min_{\ell\geq p, \Deltalk>0}\,
\frac{\costl}{\Deltalksq} \bigg]
\]
where $R_k$ is the regret incurred due to arm $k$.
The proof uses a change of measure argument.
The modification has Bernoulli distributions with mean $\mutildelk, \ell=1,\dots,M$
where $\mutildelk = \mulk$ for all $\ell<m$. Then we push $\mutildelk$
slightly above $\muMopt-\zetal$ from $\ell=m$ all the way to $M$ where
$\mutildeMk>\muMopt$. 
To control the probabilities after changing to
$\mutildelk$ we use the conditions in Assumption~\ref{asm:lbAsm}.
Then for $k\in\Kcalm$ we argue that $\costl\Deltalksq \gtrsim \costm/\Deltamksq$
using, once again the design of the sets $\Kcalm$. This yields the 
separate results for $k\in\Kcalmone,\Kcalmtwo$.

\vspace{-0.1in}

\section{Some Simulations on Synthetic Problems}
\label{sec:simulations}
\vspace{-0.1in}

We compare \ucbs against \mfucbs on a series of synthetic problems.
The results are given in Figure~\ref{fig:simOne}.
Due to space constraints, the details on these experiments are given in
Appendix~\ref{app:simulations}.
Note that \mfucbs outperforms \ucbs on all these problems. Critically, note that the
gradient of the curve is also smaller than that for \ucbs -- corroborating our
theoretical insights.
We have also illustrated the number of plays by \mfucbs and \ucbs at each fidelity
 for one of these problems. 
The arms are arranged in increasing order of $\muMk$ values.
As predicted by our analysis, most of the very suboptimal arms are only played at
the lower fidelities.
As lower fidelities are cheaper, \mfucbs is able to use more higher fidelity plays
at arms close to the optimum than \ucb.

\vspace{-0.1in}

\insertSimOneFigure

\section{Conclusion}
\label{sec:conclusion}
\vspace{-0.1in}
We studied a novel framework for studying exploration
exploitation trade-offs when cheaper approximations to a desired experiment are
available. 
We propose an algorithm for this setting, \mfucb, based on upper confidence bound
techniques. It uses the cheap lower fidelity
plays to eliminate several bad arms and reserves the expensive high fidelity queries
for a small set of arms with high expected reward,
hence achieving better regret than strategies which ignore multi-fidelity information.
We complement this result with a lower bound which demonstrates that \mfucbs is 
near optimal. 

Other settings for bandit problems with multi-fidelity evaluations might warrant
different definitions for the regret.
For example, consider a gold mining robot where
each high fidelity play is a real world experiment of the robot and incurs cost
$\costtwo$.
However, a vastly cheaper computer simulation which incurs $\costone$
approximate a robot's real world behaviour. 
In applications like this $\costone \ll \costtwo$. However, unlike our setting lower
fidelity plays may not have any rewards (as simulations do not yield actual gold).
Similarly, in clinical trials the regret due to a bad treatment at the high fidelity,
would be, say, a dead patient. However, a bad treatment at a lower fidelity may not
warrant a large penalty. These settings are quite challenging and we wish to work on
them going forward.




\newpage

\vspace{0.2in}
\subsubsection*{Acknowledgements}
We would like to thank anonymous reviewers from COLT 2016 and NIPS 2016 for the helpful
comments on the content, structure and presentation.
This research is partly funded by DOE grant DESC0011114.

{\small
\renewcommand{\bibsection}{\section*{References\vspace{-0.1em}} }
\setlength{\bibsep}{1.1pt}
\bibliography{kky,herbert}
}
\bibliographystyle{plainnat}

\newpage
\appendix

\section{Upper Bound}
\label{app:proofs}
\label{app:upperBound}

We provide the proof in the following three subsections.
We will repeatedly use the following result.
\insertprespacing
\begin{lemma}
\label{lem:BBkopttBound}
For all $u > 0$, $\sum_{t=u+1}^\infty \PP(\BBkoptt < \muMopt) 
\leq \frac{M\nu}{\rho-2} \frac{1}{u^{\rho-2}}$.
\end{lemma}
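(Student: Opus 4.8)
The plan is to unfold the definition of $\BBkoptt$ as a minimum over fidelities, apply a union bound first over the $M$ fidelities and then over the (random) number of plays, invoke the concentration inequality~\eqref{eqn:concIneq}, and finish with a $p$-series tail estimate. Concretely, writing $s^{(m)}_t := T^{(m)}_{\kopt}(t-1)$ for the number of fidelity-$m$ plays of $\kopt$ before time $t$, equation~\eqref{eqn:BktDefn} gives $\BBkoptt = \min_{m}\big(\bar X^{(m)}_{\kopt,s^{(m)}_t} + \psiinv(\rho\log t/s^{(m)}_t) + \zetam\big)$, so that $\{\BBkoptt < \muMopt\}$ is contained in $\bigcup_{m=1}^M\{\bar X^{(m)}_{\kopt,s^{(m)}_t} + \psiinv(\rho\log t/s^{(m)}_t) + \zetam < \muMopt\}$, and a union bound reduces the problem to bounding, for each fixed $m$, the probability of the $m$-th event.

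Next, fix $m$. The bias condition $|\muMopt - \mummkk{m}{\kopt}| \le \zetam$ gives $\muMopt - \zetam \le \mummkk{m}{\kopt}$, so the $m$-th event forces $\bar X^{(m)}_{\kopt,s^{(m)}_t} - \mummkk{m}{\kopt} < -\psiinv(\rho\log t / s^{(m)}_t)$. When $s^{(m)}_t = 0$ we read $\psiinv(\rho\log t/0) = +\infty$ (since $1/0 = \infty$ and $\psiinv$ is increasing), so that case is vacuous and only $s\in\{1,\dots,t-1\}$ can contribute. Replacing the random count by a worst case over these values and then applying the lower-tail bound of~\eqref{eqn:concIneq} with $\epsilon = \psiinv(\rho\log t/s)$, for which $s\,\psi(\epsilon) = \rho\log t$ exactly, gives
\[
\PP\big(\text{$m$-th event}\big) \;\le\; \sum_{s=1}^{t-1}\PP\big(\bar X^{(m)}_{\kopt,s} - \mummkk{m}{\kopt} < -\psiinv(\rho\log t/s)\big) \;\le\; \sum_{s=1}^{t-1}\nu\, t^{-\rho} \;<\; \nu\, t^{1-\rho}.
\]
Summing over the $M$ fidelities yields $\PP(\BBkoptt < \muMopt) \le M\nu\, t^{1-\rho}$.

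Finally, since $x\mapsto x^{1-\rho}$ is decreasing and $\rho > 2$ (indeed $\rho > 4$ in Theorem~\ref{thm:RcBound}), an integral comparison gives
\[
\sum_{t=u+1}^\infty \PP(\BBkoptt < \muMopt) \;\le\; M\nu\sum_{t=u+1}^\infty t^{1-\rho} \;\le\; M\nu\int_u^\infty x^{1-\rho}\,dx \;=\; \frac{M\nu}{\rho-2}\,\frac{1}{u^{\rho-2}},
\]
which is the claimed bound.

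There is no substantive obstacle here; the work is entirely routine once the decomposition is set up. The only points requiring care are the bookkeeping in the middle step — discarding the $s=0$ term, and making sure the union-over-$s$ inequality is applied to the event \emph{before} conditioning on the random count $T^{(m)}_{\kopt}(t-1)$ — together with the (harmless) slack introduced by the crude bounds $\sum_{s=1}^{t-1}(\cdot) \le t\cdot(\cdot)$ and the integral comparison; none of these affect the stated constant $\tfrac{M\nu}{\rho-2}$.
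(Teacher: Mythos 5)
Your proof is correct and follows essentially the same route as the paper's: a union bound over the $M$ fidelities and over the possible sample counts $s=1,\dots,t-1$, the bias condition $\muMopt - \mummkk{m}{\kopt} \le \zetam$ to reduce to a lower-tail deviation of size $\psiinv(\rho\log t/s)$, the concentration inequality~\eqref{eqn:concIneq} to get $\nu t^{-\rho}$ per term, and an integral comparison for the tail sum. Your handling of the $s=0$ case and of the union over the random count is careful and matches what the paper does implicitly.
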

\vspace{-0.15in}
\begin{proof}
The proof is straightforward using a union bound.
\begingroup
\allowdisplaybreaks
\begin{align*}
&\PP(\BBkoptt<\muMopt) = 
  \PP(\exists m\in \{1,\dots,M\},\; \exists\, 1\leq s \leq t-1,
   \;\;\BBmkoptst \leq \muMopt)
\numberthis\label{eqn:BBkopttBound}\\
&\hspace{0.2in}
  = \summM\sum_{s=1}^{t-1}  \PP \bigg(\Xbarmkopts - \mummkk{m}{\kopt}  <
     \muMopt - \mummkk{m}{\kopt} - \zetam -
        \psiinv\bigg(\frac{\rho\logt}{s}\bigg) \bigg) \\
&\hspace{0.2in}
  \leq\;\summM\sum_{s=1}^{t-1} \nu t^{-\rho}  \leq\; M\nu t^{1-\rho}
\end{align*}
\endgroup
In the third step we have used $\muMopt-\mumk \leq \zetam$.
The result follows by bounding the sum with the integral
$\sum_{t=u+1}^\infty t^{1-\rho} \leq \int_u^\infty t^{1-\rho} = 
u^{2-\rho}/(\rho-2)$.
\end{proof}

\subsection{Proof of Lemma~\ref{lem:EETmknBounds}}
\label{app:EETmknProof}

We first provide a formal statement of Lemma~\ref{lem:EETmknBounds}.
\begin{lemma}
\label{lem:EETmknBoundsFull}
Let $m\leq M$ and consider any arm $k\in\Kcalm$. 
After $n$ time steps of \emph{\grmfucbs} with $\rho>2$ and $\gamma>0$,
we have the following bounds on $\EE[\Tlkn]\;$ for $\ell =1,\dots,M$. 
\begin{align*}
\Tlkn \leq \frac{\rho\logn}{\psigammam} + 1,\;\;\forall\,\ell<m,\hspace{0.4in} 
\EE[\Tmkn] \leq \frac{\rho\logn}{\psi(\Deltamk/2)} + \kapparho, \hspace{0.4in}
\EE[\Tgmkn] \leq \kapparho. 
\end{align*}
Here, $\kapparho = 1 + \frac{\nu}{2} + \frac{M\nu}{\rho-2}$ is a constant.
\end{lemma}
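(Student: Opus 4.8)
The plan is to prove the three bounds in Lemma~\ref{lem:EETmknBoundsFull} by the standard optimism-based argument, adapted to the multi-fidelity switching rule. Fix $k\in\Kcalm$ and a fidelity $\ell$. A play of arm $k$ at fidelity $\ell$ at time $t$ requires two events simultaneously: (i) arm $k$ is selected, i.e. $\BBkt \geq \BBkoptt$, and (ii) the switching rule in Step~2 reaches fidelity $\ell$, which for $\ell<M$ means $\psiinv(\rho\log t/T^{(\ell)}_k(t-1)) \geq \gammal$ and for every prior fidelity $j<\ell$ it means $\psiinv(\rho\log t/T^{(j)}_k(t-1)) < \gammaj$. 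The first step is to record a deterministic consequence of (ii): if $k$ is played at fidelity $\ell$ then $T^{(\ell)}_k(t-1) \leq \rho\log t/\psi(\gammal) \leq \rho\log n/\psi(\gammal)$, since $\psiinv$ is increasing. For $\ell < m = \kmap$ this already gives $T^{(\ell)}_k(n) \leq \rho\log n/\psi(\gammal) + 1$, where the $+1$ accounts for the play that pushes the count over the threshold; using $\gammal \geq \gammam$ (the $\gammam$'s being decreasing, which follows from $\costm/\costmm{m+1}$ and $\psi(\zetam)$ both being handled monotonically — or more simply one just states the bound with $\gammal$ and notes $\psi(\gammal)\geq\psi(\gammam)$ is not needed, only the displayed form) yields the first claim. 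Actually the cleanest route: the displayed bound in the informal Lemma~\ref{lem:EETmknBounds} is stated with $\gammam$, so I would verify $\gammal \geq \gammam$ for $\ell \leq m$ from \eqref{eqn:gammamdefns} and Assumption~\ref{asm:decayCondn} and conclude $T^{(\ell)}_k(n) \leq \rho\log n/\psi(\gammam)+1$.

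Next, for the bound on $\EE[T^{(m)}_k(n)]$, I would follow the classical UCB decomposition. A play of $k$ at fidelity $m$ at time $t$ with current count $T^{(m)}_k(t-1) = s$ implies $\BBmkst \geq \BBkoptt$. Now either $\BBkoptt < \muMopt$ — a low-probability event controlled by Lemma~\ref{lem:BBkopttBound} — or $\BBmkst \geq \muMopt$, i.e. $\Xbarmks + \psiinv(\rho\log t/s) + \zetam \geq \muMopt$, equivalently $\Xbarmks - \mumk \geq \Deltamk - \psiinv(\rho\log t/s)$ (using $\Deltamk = \muMopt - \mumk - \zetam$). If in addition $s \geq \rho\log n/\psi(\Deltamk/2) \geq \rho\log t/\psi(\Deltamk/2)$, then $\psiinv(\rho\log t/s) \leq \Deltamk/2$, so this forces $\Xbarmks - \mumk \geq \Deltamk/2$, whose probability is at most $\nu e^{-s\psi(\Deltamk/2)}$ by the concentration inequality \eqref{eqn:concIneq}. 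Summing the failure probabilities over $s \geq \lceil\rho\log n/\psi(\Deltamk/2)\rceil$ and over $t$ gives a geometric-type tail contributing a constant ($\leq \nu/2$ after bounding $\sum_s e^{-s\psi(\Deltamk/2)} \leq \sum_s e^{-s\cdot(\rho\log n/s)\cdot(\text{stuff})}$ — more carefully, one bounds $\sum_{s\geq s_0} \nu e^{-s\psi(\Deltamk/2)}$ by an integral); combining with the $\frac{M\nu}{\rho-2}$ from Lemma~\ref{lem:BBkopttBound} (with $u$ of order $\log n$, absorbed) and the leading $\rho\log n/\psi(\Deltamk/2)$ term yields the stated bound with $\kapparho = 1 + \frac{\nu}{2} + \frac{M\nu}{\rho-2}$.

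Finally, for $\EE[T^{(>m)}_k(n)] \leq \kapparho$, the point is that reaching a fidelity strictly above $m$ requires, in particular, passing the switching test at fidelity $m$: $\psiinv(\rho\log t/T^{(m)}_k(t-1)) < \gammam$, i.e. $T^{(m)}_k(t-1) > \rho\log t/\psi(\gammam)$. But $k\in\Kcalm$ means $\Deltamk > 2\gammam$, hence $\psi(\gammam) < \psi(\Deltamk/2)$ and $\rho\log t/\psi(\gammam) > \rho\log t/\psi(\Deltamk/2)$. So whenever $k$ is played above fidelity $m$, we have $T^{(m)}_k(t-1) > \rho\log t/\psi(\Deltamk/2) \geq \rho\log t/\psi(\Deltamk/2)$, which is exactly the regime where, by the argument in the previous paragraph, selecting arm $k$ at all forces either $\BBkoptt < \muMopt$ or a concentration failure $\Xbarmks - \mumk \geq \Deltamk/2$. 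Summing these rare events over $t$ (again via Lemma~\ref{lem:BBkopttBound} and \eqref{eqn:concIneq}) bounds $\EE[T^{(>m)}_k(n)]$ by the same constant $\kapparho$, with no logarithmic term.

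I expect the main obstacle to be the bookkeeping in the third bound: one must be careful that $T^{(m)}_k$ being large is what is being exploited, yet the event ``played above $m$'' is defined through the count at time $t-1$, so aligning the ``$s\geq s_0$'' index in the concentration sum with the switching threshold needs a clean statement (essentially: condition on the value $s = T^{(m)}_k(t-1)$, observe that ``$k$ played above $m$ at $t$'' $\subseteq$ ``$k$ selected at $t$'' $\cap$ ``$s > \rho\log t/\psi(\gammam)$'', and the latter gives $s$ in the good regime). A secondary subtlety is handling the initial segment $t \leq$ some $t_0$ and the $n$ versus $t$ replacement ($\log t \leq \log n$), and verifying the monotonicity $\gammal \geq \gammam$ for $\ell\leq m$ so that the first bound can be stated uniformly with $\psi(\gammam)$; both are routine. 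None of the concentration sums is delicate beyond bounding a geometric/exponential tail by a constant.
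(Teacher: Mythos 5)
Your proposal is correct and follows essentially the same route as the paper's proof: the deterministic cap $\phi^{(\ell)}_n+1$ from the switching rule for $\ell<\kmap$, the standard optimism decomposition with threshold $u\asymp\rho\log n/\psi(\Deltamk/2)$ combined with Lemma~\ref{lem:BBkopttBound} for fidelity $\kmap$, and the observation that a play above $\kmap$ forces $\Tmktmo$ past the switching threshold, which together with $\Deltamk>2\gammam$ puts the count in the summable-failure regime (the paper uses the deviation $\Deltamk-\gammam$ where you use $\Deltamk/2$; both work). The monotonicity of $\gammam$ you worry about is not actually needed, since the deterministic bound at fidelity $\ell$ is naturally stated in terms of $\psi(\gamma^{(\ell)})$.
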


\begin{proof}
As $n$ is fixed in this proof, we will write $\EE[\cdot], \PP(\cdot)$ for
$\EE[\cdot|N=n], \PP(\cdot|N=n)$. Let $\phimt =
\floor{\frac{\rho\logt}{\psigammam}}$. 
By design of the algorithm we won't play any arm more than
$\phimn + 1$ times at any $m<M$. To see this, assume we have already played
$\phimn+1$ times at any $t<n$. Then,
\[
\psiinv\bigg(\frac{\rho\logt}{\Tmktmo}\bigg) 
< \psiinv\left(\frac{\rho\logt}{\rho\logn}\psigammam\right) \leq \gammam,
\]
and we will proceed to the $(m+1)$\ssth fidelity in step 2 of
Algorithm~\ref{alg:mfucb}.
This gives the first part of the theorem. For any
$\ell\geq m$ we can avoid the $\frac{1}{\psigammam}$ dependence to obtain tighter 
bounds.

For the case $\ell=m$, our analysis follows usual multi-armed bandit
analyses~\citep{bubeck12regret,audibert09multiarmedbandit}.
For any $u\leq n$, we can bound $\Tmkn$ via $\Tmkn \leq u + \sum_{t=u+1}^n \Zmktu$
where $\Zmktu = \indfonearg{\mt=m \indAnd \It = k \indAnd \Tmktmo \geq u}$.
We relax $\Zmktu$ further via,
\begin{align*}
\Zmktu &\leq \indfonearg{\Tlktmo > \philt \;\forall \ell\leq m-1 \indAnds
  u\leq \Tmktmo \leq\phimt \indAnds \BBkt > \BBkoptt } \\
  &\leq \indfonearg{ \Tmktmo \geq u \indAnds \BBmkTt \geq \muMopt } \;+\;
      \indfonearg{\BBkoptt < \muMopt} \\
  &\leq \indfonearg{\exists\,u\leq s\leq t-1:\; \BBmkst>\muMopt} \;+\;
      \indfonearg{\BBkoptt < \muMopt}. 
\end{align*}
This yields, $\EE[\Tmkn] \leq u + \sum_{t=u+1}^n \sum_{s=u}^{t-1} 
\PP(\BBmkst>\muMopt) +
\sum_{t=u+1}^n\PP(\BBkoptt<\muMopt)$. 
The third term in this summation is bounded by $M\nu/(\rho-2)$ using 
Lemma~\ref{lem:BBkopttBound}.
To bound the second, choose 
$u = \ceil{\rho \logn/\psi(\Deltamk/2)}$.
Then, 
\begingroup
\allowdisplaybreaks
\begin{align*}
\PP(\BBmkst>\muMopt) 
&= \PP\bigg(\Xbarmks -\mumk > \muMopt - \mumk - \zetam - \psirhologts \bigg) \\
&\leq \PP(\Xbarmks -\mumk > \Deltamk/2)
\leq  \nu \exp\bigg(-s\psi\Big(\frac{\Deltamk}{2}\Big)\bigg) \leq \nu n^{-\rho}
\numberthis \label{eqn:PPBBmkst}
\end{align*}
\endgroup
In the second and last steps we have used
$\psiinv(\rho\logt/s) < \psiinv(\rho\logt/u) \leq \Deltamk/2$ since $\psiinv$ is
increasing and $u>\rho\logn/\psi(\Deltamk/2)$.
Since there are at most $n^2$ terms in the summation, the second term is bounded by
$\nu n^{2-\rho}/2 \leq \nu/2$.
Collecting the terms gives the bound on $\EE[\Tmkn]$.

To bound $\Tgmkn$ we write $\Tgmkn \leq u + \sum_{t=u+1}^n\Zgmktu$ where 
\begin{align*}
\Zgmktu &= \indfonearg{\mt>m \indAnds \It=k \indAnds \Tmmkktt{>m}{k}{t-1} \geq u} \\
&\leq \indfonearg{\Tlktmo>\philt\,\forall\ell\leq m  \indAnds \BBkt>\BBkoptt 
    \indAnds \Tmmkktt{>m}{k}{t-1} \geq u}  \\
&\leq \indfonearg{\Tmktmo >\phimt \indAnds \BBmkTt > \muMopt} + 
  \indfonearg{\BBkoptt <\muMopt} \\
&\leq \indfonearg{\exists\, \phimt+1\leq s \leq t-1:\; \BBmkst > \muMopt} + 
    \indfonearg{\BBkoptt < \muMopt}
\end{align*}
This yields, $\EE[\Tgmkn] \leq u + \sumuon\sum_{s=\phimt+1}^{t-1}\PP(\BBmkst > \muMopt)
\;+\; \sumuon\PP(\BBkoptt<\muMopt)$.
The inner term inside the double summation can be bounded via,
\begin{align*}
\PP(\BBmkst>\muMopt) 
&= \PP\left(\Xbarmks -\mumk > \muMopt - \mumk - \zetam - \psirhologts\right) \\
&\leq \PP(\Xbarmks -\mumk > \Deltamk - \gammam)
\leq \nu \exp(-s\psi(\Deltamk-\gammam)) \\
&\leq \nu \exp\left(-\frac{\psi(\Deltamk-\gammam)}{\psi(\gammam)} \rho\logt\right)
\leq \nu t^{-\rho}
\numberthis \label{eqn:PPBBgmkst}
\end{align*}
The second step follows from
$s>\phimt > \rho\logt/\psi(\gammam)$ and the last step uses 
$\psi(\Deltamk-\gammam) > \psi(\gammam)$ when $\Deltamk > 2\gammam$.
To bound the summation, we use $u=1$ and bound it by an integral:
$\sum_{t=u+1}^nt^{-\rho+1} \leq 1/(2u^{\rho-2}) \leq 1/2$.
Collecting the terms gives the bound on $\EE[\Tgmkn]$.
\end{proof}

\subsection{Proof of Lemma~\ref{lem:PPTmknBounds}}
\label{app:PPTmknBounds}
 
We first provide a formal statement of Lemma~\ref{lem:PPTmknBounds}.
\begin{lemma}
\label{lem:PPTmknBoundsFull}
Consider any arm $k\in\Kcalm$. For \emph{\grmfucbs} with $\rho>2$ and $\gamma>0$,
we have the following concentration results for $\ell =1,\dots,M$ for any $x\geq1$.
\begin{align*}
\PP\left(\Tmkn >  x\bigg(1+\rholognpsiDelta\bigg)  \right)
\;&\leq\; \frac{\nu\kappamkrho}{(x\cdot\logn)^{\rho-1}} + \frac{\nu}{n^{x\rho-1}}.  \\
\PP\left(\Tgmkn >  x \right) \;&\leq\;
  \frac{M\nu}{\rho-1}\frac{1}{x^{\rho-1}} + \frac{1}{(\rho-2)x^{\rho -2}}
\end{align*}
Here, $\kappamkrho = \frac{M}{\rho-1}\left(\frac{\psiDeltamk}{\rho}\right)^{\rho-1}$.
\end{lemma}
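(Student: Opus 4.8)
\textbf{Proof strategy.} The plan is to re-run the argument behind the expectation bounds of Lemma~\ref{lem:EETmknBoundsFull}, but to keep the truncation level as a free parameter scaled with $x$ so as to extract a polynomial tail rather than an $O(1)$ mean. Recall from that proof the deterministic inequality $\Tmkn \le u + \sumuon \indfonearg{\mt = m \indAnd \It = k \indAnd \Tmktmo \ge u}$ and the relaxation $\indfonearg{\mt = m \indAnd \It = k \indAnd \Tmktmo \ge u} \le \indfonearg{\exists\, s\in[u,t-1]:\, \BBmkst > \muMopt} + \indfonearg{\BBkoptt < \muMopt}$. Choosing $u = u_x := \lceil x(1+\rholognpsiDelta)\rceil$, the event $\{\Tmkn > u_x\}$ forces one of these confidence-bound failures to occur at some $t\le n$, so
\[ \PP(\Tmkn > u_x) \;\le\; \PP\big(\exists\, t\le n,\;\exists\, s\in[u_x,t-1]:\; \BBmkst > \muMopt\big) \;+\; \sum_{t>u_x}\PP(\BBkoptt < \muMopt). \]
The second term is bounded directly by Lemma~\ref{lem:BBkopttBound} (this is where $\rho>2$ enters), giving decay of order $u_x^{-(\rho-2)}$; since $u_x \ge x\,\rholognpsiDelta$, this turns into a term of the stated $\nu\kappamkrho/(x\logn)^{\rho-1}$ shape once constants are tidied up.

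The first term is where one gains over a crude union bound. The key observation is that $\BBmkst$ is nondecreasing in its time argument, so it suffices to control the largest admissible time, $t=n$: for each fixed $s\ge u_x$, the choice of $u_x$ forces $\psiinv(\rho\logn/s)\le\Deltamk/2$, hence, exactly as in~\eqref{eqn:PPBBmkst}, $\PP(\Xbarmks - \mumk > \Deltamk - \psiinv(\rho\logn/s)) \le \PP(\Xbarmks - \mumk > \Deltamk/2) \le \nu e^{-s\psiDeltamk}$. Summing this geometric series over $s\ge u_x$ and using $e^{-u_x\psiDeltamk}\le n^{-x\rho}$ gives a bound of order $\nu n^{-x\rho}/(1-e^{-\psiDeltamk})$, which is at most $\nu/n^{x\rho-1}$ once $\COST$ (hence $n$) is large enough to absorb the $1/(1-e^{-\psiDeltamk})$ factor. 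Adding the two contributions proves the first inequality of Lemma~\ref{lem:PPTmknBoundsFull}.

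The bound on $\Tgmkn$ follows the same template from the companion decomposition in the proof of Lemma~\ref{lem:EETmknBoundsFull}: take $u=\lceil x\rceil$, and note that a play of arm $k$ above fidelity $m$ with $\Tmmkktt{>m}{k}{t-1}\ge x$ prior such plays requires either $\BBkoptt<\muMopt$ or $\BBmkst>\muMopt$ for $s=\Tmktmo>\phimt=\rho\logt/\psigammam$. For the latter, $s>\phimt$ drives the required deviation past $\Deltamk-\gammam$, and since $k\in\Kcalm$ gives $\Deltamk>2\gammam$ we have $\psi(\Deltamk-\gammam)>\psigammam$, so as in~\eqref{eqn:PPBBgmkst} the corresponding probability decays like $t^{-\rho}$; summing over $t>x$, together with the $\BBkoptt$ failures bounded via Lemma~\ref{lem:BBkopttBound}, produces the $x^{-(\rho-1)}$ and $x^{-(\rho-2)}$ terms. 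The main obstacle throughout is not probabilistic — the concentration estimates are already in hand — but the bookkeeping: picking $u_x$ so every deviation is pushed past exactly the right threshold ($\Deltamk/2$, resp.\ $\Deltamk-\gammam$), using monotonicity in $t$ to avoid losing a spurious factor of $n$, and tracking which power of $\rho$ each summation contributes so that the constants collapse into $\kappamkrho$ and the stated coefficients, at the price of a mild lower bound on $\COST$.
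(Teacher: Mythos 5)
Your overall architecture matches the paper's: decompose $\{\Tmkn > u\}$ into ``arm $k$'s upper confidence bound overshoots $\muMopt$'' and ``the optimal arm's upper confidence bound drops below $\muMopt$'', pick $u \asymp x\rho\logn/\psi(\Deltamk/2)$, and apply the concentration inequality to each piece. Your handling of the arm-$k$ piece is sound; the geometric-series summation over $s$ is if anything slightly cleaner than the paper's bound of $n\cdot \nu n^{-\rho x}$, though it needs $n \gtrsim 1/(1-e^{-\psi(\Deltamk/2)})$, a caveat the paper's version avoids.

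The gap is in the optimal-arm piece. You bound $\PP(\exists\, t> u:\BBkoptt<\muMopt)$ by Lemma~\ref{lem:BBkopttBound}, which gives $\tfrac{M\nu}{\rho-2}u^{-(\rho-2)}$, i.e.\ decay of order $(x\logn)^{-(\rho-2)}$ --- and you assert this ``turns into'' the stated $\nu\kappamkrho/(x\logn)^{\rho-1}$ once constants are tidied up. It does not: the two differ by a factor of $x\logn$, which is unbounded, so no constant absorbs it. The paper gains the missing power by \emph{not} requiring $\BBkoptt>\muMopt$ for every $t$; that union bound over all $(m,s,t)$ with $s\le t-1$ is exactly what costs the extra factor of $t$ inside Lemma~\ref{lem:BBkopttBound}. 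Instead, implication~\eqref{eqn:technicalOne} uses the monotonicity of the confidence bound in its time argument --- the very fact you invoke for arm $k$ --- applied to $\kopt$: it suffices that $\BBmmkksstt{m}{\kopt}{s}{u+s}>\muMopt$ for each $s=1,\dots,n-u$ and each fidelity, a single sum of terms $\nu(u+s)^{-\rho}$ totalling $\tfrac{M\nu}{\rho-1}u^{1-\rho}$, which is precisely $\nu\kappamkrho/(x\logn)^{\rho-1}$. The same issue recurs in your sketch of the $\Tgmkn$ bound, where the $x^{-(\rho-1)}$ term likewise comes from this refined treatment of $\kopt$ via~\eqref{eqn:technicalTwo}, not from Lemma~\ref{lem:BBkopttBound}. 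As written, your argument proves a weaker statement with both terms of order $(\cdot)^{-(\rho-2)}$; that weaker bound would in fact still suffice for the downstream choices of $\xnd,\ynd$ under $\rho>4$, but it is not the lemma as stated.
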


\begin{proof}
For the first inequality, we modify the analysis in~\citet{audibert09multiarmedbandit}
to the multi-fidelity setting.
We begin with the following observation for all $u\in\NN$.
\begin{align*}
&\{\forall\,t:\,u+1\leq t \leq n, \BBmkut\leq \muMopt\} \;\cap 
\numberthis \label{eqn:technicalOne} \\
&\hspace{0.2in}
\bigcap_{m=1}^M \{\forall\,1\leq s \leq n-u: \BBmkoptsups>\muMopt\} 
\;\;\implies \;\; \Tmkn \leq u
\end{align*}
To prove this, 
consider $\sm, m=1,\dots,M$ such that $\sone\geq 1$, $\sm \geq 0, \forall m\neq 1$. 
For all 
$u+\summM\sm\leq t\leq n$ and for all $\ell=1,\dots,M$ we have
\[
\BBmmkksstt{\ell}{\kopt}{\sell}{t} \geq \BBmmkksstt{\ell}{\kopt}{\sell}{u+s}
\;>\; \muMopt \geq \BBmkut \geq \BBmmkksstt{m}{k}{\Tmktmo}{t}.
\]
This means that arm $k$ will not be the $\BBkt$ maximiser at any time $u<t<n$
and consequently it won't be played more than $u+1$ times at the $m$\ssth fidelity.
Via the union bound we have,
\begin{align*}
\PP(\Tmkn>u) &\leq \sumuon\PP(\BBmkut > \muMopt) + 
  \summM\sum_{s=1}^{n-u} \PP(\BBmmkksstt{m}{\kopt}{s}{u+s} < \muMopt).
\end{align*}
We will use $u=\ceil{x(1+\rho\logn/\psiDeltamk)}$.
Bounding the inner term of the second double summation closely mimics 
the calculations in~\eqref{eqn:BBkopttBound} via which it can be shown
$\PP(\BBmmkksstt{m}{\kopt}{s}{u+s} < \muMopt) \leq \nu (u+s)^{-\rho}$.
The second term is then bounded by an integral as follows,
\begin{align*}
  \summM\sum_{s=1}^{n-u} \PP(\BBmmkksstt{m}{\kopt}{s}{u+s} < \muMopt)
  \leq M\sum_{s=1}^{n-u}\nu(u+s)^{-\rho} \leq M\nu\int_{u}^nt^{-\rho}
  \leq \frac{M\nu u^{1-\rho}}{\rho-1} 
  \leq \frac{\nu \kappamkrho}{(x\cdot \logn)^{\rho-1}}
\end{align*}
The inner term of the first summation mimics the calculations
in~\eqref{eqn:PPBBmkst}. Noting that $s>x\rho\logn/\psiDeltamk$ it can be shown
$\PP(\BBmkut>\muMopt) \leq \nu n^{-\rho x}$ which bounds the outer summation by
$\nu n^{-\rho x + 1}$. This proves the first concentration result.

For the second, we begin with the following observation for all $u\in\NN$.
\begin{align*}
&\{\forall\,t:\,u+1\leq t \leq n,\;\; \BBmkTt\leq \muMopt \indOrs \Tmktmo \leq \phimt\} 
  \;\cap
\numberthis \label{eqn:technicalTwo} \\
&\hspace{1.2in}
\bigcap_{m=1}^M \{\forall\,1\leq s \leq n-u: \BBmkoptsups>\muMopt\} 
\;\;\implies \;\; \Tgmkn \leq u
\end{align*}
To prove this first note that when $\Tmktmo \leq\phimt$ we will play at the
$m$\superscript{th} fidelity or lower. 
Otherwise, consider $\sm, m=1,\dots, M$ such that $\sone \geq 1$ and $\sm \geq 0,
\forall m$. 
For all  $u+\summM\sm\leq t\leq n$ and for all $\ell=1,\dots,M$ we have
\[
\BBmmkksstt{\ell}{\kopt}{\sell}{t} \geq \BBmmkksstt{\ell}{\kopt}{\sell}{u+s}
\;>\; \muMopt \geq \BBmmkksstt{m}{k}{\Tmktmo}{t}.
\]
This means that arm $k$ will not be played at time $t$ and consequently for any
$t>u$. After a further relaxation we get,
\begin{align*}
\PP(\Tgmkn>u) &\leq \sumuon\sum_{s=\phimt+1}^{t-1}
  \PP(\BBmkst > \muMopt) + 
  \summM\sum_{s=1}^{n-u} \PP(\BBmmkksstt{m}{\kopt}{s}{u+s} < \muMopt)
\end{align*}
The second summation is bounded via $\frac{M\nu}{(\rho-1)u^{\rho-1}}$.
Following an analysis similar to~\eqref{eqn:PPBBgmkst}, the inner term of the first
summation can be bounded by $\nu t^{-\rho}$ which bounds the first term
by $u^{2-\rho}/(\rho-2)$. 
The result follows by using $u=x$
in~\eqref{eqn:technicalTwo}.
\end{proof}

\subsection{Proof of Theorem~\ref{thm:RcBound}}
\label{app:ERCOSTBound}

We first establish the following Lemma.
\insertprespacing
\begin{lemma}[Regret of \mfucb]
\label{lem:ERCOSTBound}
Let $\rho>4$. There exists $\COST_0$ depending on $\costone,\costM$ such that for 
all $\COST>\COST_0$, \emph{\grmfucbs} satisfies,
\begingroup
\allowdisplaybreaks
\begin{align*}
&\EE[R(\COST)] \leq \muMopt \costM \,+\,
  \sum_{k=1}^K \DeltaMk\left(
    \sum_{\ell=1}^{[k]-1} \costl \frac{\rho\lognlpc}{\psigammal}
  \,+\, \costkmap \frac{\rho\lognlpc}{\psiDeltakmapk} 
  \,+ \muMopt\kapparho\costM
  \right) 
\end{align*}
\endgroup
Here $c=1 + \log(2)$ and $\kapparho = 1 + \frac{\nu}{\rho - 2} + \frac{M\nu}{\rho-2}$ 
are constants.
\end{lemma}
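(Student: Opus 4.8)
The plan is to split the regret via~\eqref{eqn:regretDefn} into the residual $\big(\COST-\sum_{t}\costmt\big)\muMopt$ and the main term $\sum_{k}\DeltaMk\sum_{m}\costm\TmkN$ of~\eqref{eqn:RsRcdefns}. The residual is at most $\costM\muMopt$, because the capital left unspent after the $N$-th play is strictly smaller than the cost of the next play, hence at most $\costM$; this produces the leading $\muMopt\costM$ term. It then remains to bound $\EE\big[\sum_{m}\costm\TmkN\big]$ for each arm $k$ by the bracketed expression.

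For a fixed $k\in\Kcalm$ I would first condition on $N=n$ and invoke Lemma~\ref{lem:EETmknBoundsFull}: by construction no arm is played more than $\rho\log n/\psi(\gammal)+1$ times at any fidelity $\ell<M$, while $\EE[\Tmkn\mid N=n]\le\rho\log n/\psi(\Deltamk/2)+\kapparho$ and $\EE[\Tgmkn\mid N=n]\le\kapparho$, each of the latter plays costing at most $\costM$. Summing over fidelities and taking the outer expectation over $N$ yields, for each $k$, a bound of the form~\eqref{eqn:RtildenBound} with $\log n$ replaced by $\EE[\log N]$, together with an $O(\costM)$ additive term per arm that is absorbed into the $\muMopt\kapparho\costM$ bookkeeping term. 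So the whole lemma reduces to showing $\EE[\log N]\le\log\nCOST+c$ with $c=1+\log2$, which is where the hypotheses $\rho>4$ and $\COST>\COST_0$ are used.

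To control $\EE[\log N]$, note that $N\le\COST/\costone$ always, and that since every top-fidelity play costs $\costM$, on the event $\{N\ge\tau\}$ at most $\COST/\costM$ of the first $\tau$ plays can be at fidelity $M$; hence $S_\tau\ge\tau-\COST/\costM\ge\tau/2$ once $\tau\ge2\nCOST$, where $S_\tau$ denotes the number of plays below fidelity $M$ in the first $\tau$ rounds of the (capital-unconstrained) run. Thus $\PP(N\ge\tau)\le\PP(S_\tau\ge\tau/2)$. Now $S_\tau$ is at most $\sum_k\big(\sum_{\ell<\kmap}\Tmmkktt{\ell}{k}{\tau}+\Tmmkktt{\kmap}{k}{\tau}+\Tmmkktt{>\kmap}{k}{\tau}\big)$; the middle term is present only when $\kmap<M$, and by Lemma~\ref{lem:EETmknBoundsFull} the first group is deterministically $O(KM\log\tau)$, the second $O(K\log\tau)$ in expectation and the third $O(K)$ in expectation, with matching polynomial tails from Lemma~\ref{lem:PPTmknBoundsFull} (the conditioning on $N=n$ there is harmless since then $n\le\COST/\costone$ and $\log n$ is negligible against $\tau\asymp\nCOST$). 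Union-bounding over the $K$ arms with per-arm thresholds of order $\tau/K$ gives $\PP(S_\tau\ge\tau/2)\le\delta$ once $\tau$ exceeds a fixed polynomial in $K$, $M$ and $1/\delta$ — exactly the step needing $\rho$ large enough for the summed tails to converge, and forcing $\COST>\COST_0$. Taking $\delta=1/\log(\COST/\costone)$ and $\tau_0=\lceil2\nCOST\rceil$, and using that $\PP(N\ge\tau)$ is nonincreasing in $\tau$ while $N\le\COST/\costone$,
\begin{align*}
\EE[\log N]\;\le\;\log\tau_0+\PP(N\ge\tau_0)\,\log(\COST/\costone)\;\le\;\log\nCOST+\log2+1,
\end{align*}
which is the claimed $\log\nCOST+c$. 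Plugging this into the per-arm estimate and collecting the constants gives the lemma.

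The main obstacle is precisely this control of $\EE[\log N]$: a priori a multi-fidelity strategy can make $\Theta(\COST/\costone)\gg\nCOST$ plays, so one must show that \mfucbs spends only a vanishing fraction of its plays below the top fidelity, and — since $\EE[\log N]$ has to come out to $\log\nCOST+O(1)$ — this has to hold with probability $1-1/\log(\COST/\costone)$, not merely in expectation, which is why the concentration bounds of Lemma~\ref{lem:PPTmknBoundsFull} with a large enough $\rho$ are needed and why the capital threshold $\COST_0$ is unavoidable (for small $\COST$ the algorithm is still exploring cheaply and $N$ genuinely exceeds a constant multiple of $\nCOST$).
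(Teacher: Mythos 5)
Your proposal is correct and follows essentially the same route as the paper: bound the residual by $\costM\muMopt$, condition on $N=n$ and apply Lemma~\ref{lem:EETmknBoundsFull} per arm, then reduce everything to showing $\EE[\log N]\le\log\nCOST+1+\log 2$ via the concentration bounds of Lemma~\ref{lem:PPTmknBoundsFull} with $\delta=1/\log(\COST/\costone)$, which is exactly where $\rho>4$ and $\COST>\COST_0$ enter. The only (cosmetic) difference is that you bound $\PP(N\ge\tau_0)$ by examining the lower-fidelity play count at the single deterministic time $\tau_0=\lceil 2\nCOST\rceil$, whereas the paper takes a union bound of $\PP(\TMn<n/2)$ over all $n\ge n_0$ before passing to the random time $N$; both are valid and yield the same constants.
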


Denote the set of arms ``above" $\Kcalm$ by $\Kcalupm = \bigcup_{\ell=m+1}^M\Kcall$
and those ``below" $\Kcalm$ by $\Kcaldownm = \bigcup_{\ell=1}^{m-1}\Kcall$.
We first observe,
\begin{align*}
&\bigg(\forall\,m\leq M-1,\;\forall k\in\Kcalm, \;\;
\Tmkn \leq x\bigg(1 + \rholognpsiDelta\bigg) \indAnds \Tgmkn \leq y \bigg) 
\numberthis \label{eqn:TnmImplies} \\
&\hspace{0.0in}\implies
\sum_{m=1}^{M-1}\Tmn \;\leq \; Ky \;+\;
\sum_{m=1}^{M-1}\sum_{k\in\Kcalm} x\bigg(1 + \rholognpsiDelta\bigg) +
\sum_{m=1}^{M-1}|\Kcalupm| \left(1+\rholognpsigammam\right)
\end{align*}
To prove this we first note that the LHS of~\eqref{eqn:TnmImplies} is reducible to,
\begin{align*}
\forall\,m\leq M-1,\;\; 
\Tmn \leq \sum_{k\in\Kcaldownm}\Tmkn \;+\;
\sum_{k\in\Kcalm} x\bigg(1 + \rholognpsiDelta\bigg) +
\sum_{k\in\Kcalupm} \left( 1+\rholognpsigammam \right)
\end{align*}
The statement follows by summing the above from $m=1,\dots,M-1$ and rearranging the
$\Tgmkn$ terms to obtain,
\begingroup
\allowdisplaybreaks
\begin{align*}
&\sum_{m=1}^{M-1}\sum_{k\in\Kcaldownm}\Tmkn =
\sum_{m=1}^{M-1}\sum_{\ell=1}^{m-1} \sum_{k\in\Kcall}\Tmkn =
\sum_{m=1}^{M-2}\sum_{k\in\Kcalm} \sum_{\ell=m+1}^{M-1}\Tlkn 
\leq\;\sum_{m=1}^{M-2}\sum_{k\in\Kcalm} \Tgmkn  \\
&\hspace{0.2in}\;\leq\;
(K-|\Kcalmm{M-1}\cup\KcalM\cup\optArms|)y \;\leq\; Ky.
\end{align*}
\endgroup

Now for the given $\COST$ under consideration, define $\deltal =
\frac{1}{\log(\COST/\costone)}$. In addition define,
\begingroup
\allowdisplaybreaks
\begin{align*}
\xnd &= \max\left( \;\;1\;\;,\;\;
\frac{1}{\rho}\left(3+\frac{\log(2\nu\pi^2K/(3\delta))}{\logn}\right)\;\;,\;\;
\left(\frac{2\pi^2K\nu M}{3(\rho-1)\delta}\right)^{\frac{1}{\rho-1}}
  \frac{\psiDeltamk}{\rho} n^{\frac{2}{\rho-1}} 
 \right). \\
\ynd &= \max\left( \;\;1\;\;,\;\;
\left(\frac{2\pi^2 KM\nu}{3(\rho-1)\delta}\right)^{\frac{1}{\rho-1}} 
  n^{\frac{2}{\rho-1}}\;\;,\;\;
  \left(\frac{\pi^2 K}{3\delta}\right)^{\frac{1}{\rho-2}}
  n^{\frac{2}{\rho-2}}
\right).
\end{align*}
\endgroup
Now choose $\nzerol$ to be the smallest $n$ such that the following holds for all
$n\geq\nzerol$.
\begin{align}
\frac{n}{2} \;\geq \; K\yndl \;+\;
\sum_{m=1}^{M-1}\sum_{k\in\Kcalm} \xndl\bigg(1 + \rholognpsiDelta\bigg) +
\sum_{m=1}^{M-1}\sum_{k\in\Kcalupm} 1+\rholognpsigamma,
\label{eqn:n0condns}
\end{align}
For such an $\nzerol$ to exist, for a given $\COST$, 
we need both $\xn,\yn$ sublinear.
This is true since $\rho > 4$.
In addition, observe that $\nzerol$ grows only polylogarithmically in
$\COST$ since~\eqref{eqn:n0condns} reduces to 
$n^p \gtrsim (\log(\COST))^{1/2}$ where $p>0$ depends on our choice of $\rho$.

By~\eqref{eqn:TnmImplies}, the RHS of~\eqref{eqn:n0condns} is an upper bound on the
number of plays at fidelities lower than $M$. Therefore, for all $n \geq \nzerol$,
\begin{align*}
\PP\Big(\TMn<\frac{n}{2}\Big)
&\leq \sum_{m=1}^{M-1} \sum_{k\in\Kcalm}
\PP\bigg(\Tmkn > \xnd\bigg(1 + \rholognpsiDelta\bigg)\bigg) + 
\PP\bigg(\Tgmkn > \ynd \bigg) 
\numberthis \label{eqn:QnmPbound} \\
&\leq \sum_{m=1}^{M-1} \sum_{k\in\Kcalm}
\frac{\nu}{n^{\rho\xndl-1}} + \frac{\nu\kappamkrho}{(\xndl\logn)^{\rho-1}}
+ \frac{\nu M}{(\rho-1)\yndl^{\rho-1}} + \frac{1}{2\yndl^{\rho-2}} \\
&\leq K \left( 4\times \frac{3\delta}{2Kn^2\pi^2} \right) \leq \frac{6\delta}{n^2\pi^2}.
\end{align*}
The last step follows from the fact that each of the four terms inside the summation
in the second line are $\leq3\delta/(2Kn^2\pi^2)$. For the last
term we have used that $(\rho-2)/2 > 1$ and that $3\delta/(\pi^2K)$ is smaller
than $1$. Note that the double summation just enumerates over all arms
in $\Kcal$.

We can now specify the conditions on $\COSTz$.
$\COSTz$ should be large enough so that for all $\COST\geq \COSTz$, we have
$\floor{\COST/\costM} \geq \nzerol$. Such an $\COSTz$ exists since $\nzerol$ 
grows only polylogarithmically in $\COST$. This ensures that we have played
a sufficient numer of rounds to apply the concentration result
in~\eqref{eqn:QnmPbound}.

Let the (random) expended capital after $n$ rounds of \mfucbs be $\Omega(n)$. 
Let $\Ecal = \{\exists n\geq \nzerol: \Omega(n) < n\costM/2\}$. 
Since $\Omega(n) \geq \costM\TMn$, by using the union bound on~\eqref{eqn:QnmPbound}
we have $\PP(\Ecal) \leq \deltal$. Therefore,
\begin{align*}
\PP\bigg(N>\frac{2\COST}{\costM}\bigg) \;=\;
  \PP\left( N > \frac{2\COST}{\costM} \Big| \Ecal\right) 
  \underbrace{\PP(\Ecal)}_{\leq \,\deltal}
  \;+\; 
  \underbrace{\PP\left( N > \frac{2\COST}{\costM} \Big| \Ecal^c\right)}_{=\,0} 
  \PP(\Ecal^c) 
  \;<\; \deltal
\end{align*}
The last step uses the following reasoning: 
Conditioned on $\Ecal^c$, $n> 2\Omega(n)/\costM$ is false for $n>\nzerol$.
In particular, it is true for the \emph{random} number of plays $N$ since
$\COST>\COSTz \implies N\geq \nzerol$.
Now, clearly $\COST > \Omega(N)$ and therefore $N>2\COST/\costM$ is also false.

%
%

By noting that $\nCOST = \COST/\costM$ and that $\log(\COST/\costone)$ is always an
upper bound on $\logN$, we have,
\begin{align*}
\EE[\log(N)] \leq \log(2\nCOST) \PP(N<2\nCOST) +
\log\left(\frac{\COST}{\costone}\right)  \PP(N>2\nCOST)
\;\leq\; \log(\nCOST) + 1 + \log(2)
\numberthis \label{eqn:logNBound}
\end{align*}

Lemma~\ref{lem:ERCOSTBound} now follows by an application of
Lemma~\ref{lem:EETmknBounds}. First we condition on $N=n$ to obtain,
\begingroup
\begin{align*}
&\EE[R(\COST)|N=n] \;\leq\; \muMopt\costM \;\;+\;\; 
\sum_{k=1}^K\sum_{m=1}^M\DeltaMk\costm\Tmkn \\
& \hspace{0.2in} \leq\;
  \muMopt\costM \,+\;
  \sum_{k=1}^K \DeltaMk\left(
    \sum_{\ell=1}^{[k]-1} \costl \frac{\rho\logn}{\psigammam}
  \;\;+\;  \costkmap \frac{\rho\logn}{\psiDeltakmapk} 
  \;+ \kapparho\costM
  \right) 
\end{align*}
\toworkon{Gautam can you look into this? Reviewer one had said the additive constants
$1$ and $\kappa_k$ are missing.}
\endgroup
The theorem follows by plugging in the above in $\EE[R(\COST)] =
\EE[\EE[R(\COST)|N]]$ and using the bound for $\EE[\logN]$ in~\eqref{eqn:logNBound}.


We can now bound the regret for \mfucb.

\begin{proof}[\textbf{Proof of Theorem~\ref{thm:RcBound}}]
Recall that $\psigammam = \frac{\costm}{\costmm{m+1}}\psi(\zetam)$.
Plugging this into Lemma~\ref{lem:ERCOSTBound} 
we get
\begin{align*}
\EE[\Rc(\COST)] &\leq\; \muMopt \costM \;+\;
  \sum_{k=1}^K \DeltaMk\left(
    \sum_{\ell=1}^{[k]-1}\costmm{\ell+1} \frac{\rho\lognlpc}{\psizetal}
  \;\;+\; \costkmap \frac{\rho\lognlpc}{\psiDeltakmapk} 
  \;+\kapparho\costM
  \right) \\
  &\leq\; 
\muMopt \costM \;+\;
  \sum_{k=1}^K \DeltaMk\cdot\costkmap\rho\lognlpc 
  \left( \frac{2}{\psizetakmapmo}
  + \frac{1}{\psiDeltakmapk} \right)
  + \DeltaMk\kapparho\costM
\end{align*}
The second step uses Assumption~\ref{asm:decayCondn}.
The theorem follows by noting that for any $k\in\Kcalm$ and $\ell<m$,
$\Deltamk  = \Deltalk + \zetal - \zetam + \mulk-\muMk + \muMk-\mumk
\leq 2\gammal + 2\zetal \leq 4\zetal$.
Therefore $1/\psiDeltamkb > c_1/\psi(\zetal)$ where $c_1$ depends on $\psi$ (for
sub-Gaussian distributions, $c_1=1/16$).
\end{proof}

\section{Lower Bound}
\label{app:lb}

The regret $R_k$ incurred by any multi-fidelity strategy
after capital $\COST$ due to a suboptimal arm $k$ is,
\[
R_k(\COST) = \DeltaMk\sum_{m=1}^M \costm \TmkN,
\]
here $N$ is the total number of plays.
We then have, $R(\COST) = \sum_k R_k(\COST)$.
For what follows, for an arm $k$ and any fidelity $m$ denote
$\klmk = \kl{\mumk}{\muMopt-\zetam}$.
The following lemma provides an asymptotic lower bound on $R_k$.
\insertprespacing
\begin{lemma}
Consider any set of Bernoulli reward distributions with $\muMopt \in(1/2, 1)$ and
$\zetaone < 1/2$.
For any $k$ with $\Deltalk < 0$ for all $\ell<p$ and $\Deltapk >0$,
there exists a problem dependent constant $c_p$ such that
any strategy satisfying Assumption~\ref{asm:lbAsm} 
must satisfy, 
\[
\liminf_{\COST\rightarrow \infty} \frac{R_k(\COST)}{\lognl}
\;\geq\; c'_p \,\DeltaMk\min_{\ell\geq p, \Deltalk>0}\, 
\frac{\costl}{\Deltalksq} 
\]
\label{lem:lbLemma}
\end{lemma}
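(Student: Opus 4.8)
The plan is to use a standard change-of-measure argument, in the spirit of the classical Lai--Robbins lower bound, but adapted so that the perturbation is applied simultaneously across fidelities $p, p+1, \dots, M$. Fix the arm $k$ with $\Deltalk<0$ for all $\ell<p$ and $\Deltapk>0$. Fix any index $\ell^\star \geq p$ with $\Delta_{k}^{(\ell^\star)}>0$ achieving (up to the constant) the minimum $\min_{\ell\geq p,\,\Deltalk>0}\costl/\Deltalksq$. Consider an alternative bandit instance $\widetilde{\Pcal}$ that agrees with the original instance on every arm $j\neq k$ and on fidelities $\ell<p$ of arm $k$ (so $\widetilde{\mu}_k^{(\ell)}=\mu_k^{(\ell)}$ for $\ell<p$), but where, for $\ell \geq p$, we push $\widetilde{\mu}_k^{(\ell)}$ just above $\muMopt-\zetal$, in a way consistent with the $\zetam$ constraints $|\widetilde{\mu}_k^{(M)}-\widetilde{\mu}_k^{(\ell)}|\leq\zetal$, and so that $\widetilde{\mu}_k^{(M)}>\muMopt$. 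Under $\widetilde{\Pcal}$, arm $k$ is now optimal, so a strategy satisfying Assumption~\ref{asm:lbAsm} on \emph{every} instance must play all the other arms $o(n^a)$ times at fidelity $M$, which (since $N\geq n_\Lambda$) forces it to play $k$ a near-linear amount of resource; hence $\EE_{\widetilde{\Pcal}}[\sum_m T_k^{(m)}(n)]$ is $\Omega(n)$ up to lower-order terms.

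The key steps, in order: (i) Write down the log-likelihood ratio between the two instances for the observations from arm $k$; because the instances differ only at fidelities $\ell\geq p$, this ratio is $\sum_{\ell\geq p} \sum_{i=1}^{T_k^{(\ell)}(n)} \log\big(d\Pmk^{(\ell)}/d\widetilde{\Pcal}_k^{(\ell)}\big)(X_i)$, whose expectation under the original measure is $\sum_{\ell\geq p} \EE[T_k^{(\ell)}(n)]\,\KL\big(\mu_k^{(\ell)} \,\|\, \widetilde{\mu}_k^{(\ell)}\big)$. (ii) Apply the standard information-theoretic inequality (e.g.\ the Bretagnolle--Huber / Garivier--Kaufmann--Ménard transportation lemma, or the Kullback-contraction argument of Lai--Robbins): for the event $A_n=\{\sum_m T_k^{(m)}(n) < n/2\}$, which has probability $\to 1$ under the original measure and $\to 0$ under $\widetilde{\Pcal}$ by Assumption~\ref{asm:lbAsm}, one gets $\sum_{\ell\geq p}\EE[T_k^{(\ell)}(n)]\,\KL(\mu_k^{(\ell)}\|\widetilde{\mu}_k^{(\ell)}) \gtrsim \log n$. (iii) Make the perturbation $\widetilde{\mu}_k^{(\ell)} \to (\muMopt-\zetal)^+$ infinitesimally small; then for $\ell$ with $\Deltalk>0$ we have $\KL(\mu_k^{(\ell)}\|\widetilde{\mu}_k^{(\ell)}) \to \kl{\mumk}{\muMopt-\zetam}=\klmk$, and for $\ell\geq p$ with $\Deltalk\leq 0$ we can take $\widetilde{\mu}_k^{(\ell)}=\mu_k^{(\ell)}$ so that term vanishes. (iv) Since for Bernoulli distributions $\klmk = \kl{\mumk}{\muMopt-\zetam}\asymp(\muMopt-\zetam-\mumk)^2 = \Deltalksq$ (using $\muMopt\in(1/2,1)$, $\zetaone<1/2$ to bound the arguments away from $0$ and $1$, as in~\citet{wasserman10allOfStat}), conclude $\sum_{\ell\geq p,\,\Deltalk>0}\EE[T_k^{(\ell)}(n)]\,\Deltalksq \gtrsim \log n$. (v) Bound the cost-weighted regret from below: $R_k(\COST)=\DeltaMk\sum_m\costm\EE[T_k^{(m)}(N)] \geq \DeltaMk \min_{\ell\geq p,\,\Deltalk>0}\frac{\costl}{\Deltalksq} \cdot \sum_{\ell\geq p,\,\Deltalk>0}\EE[T_k^{(\ell)}(n)]\,\Deltalksq$, which with (iv) and $\EE[\log N]\gtrsim\log n_\Lambda$ gives the claimed bound with $c'_p$ absorbing the Bernoulli KL-to-squared-gap constant and the slack from the transportation inequality.

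The main obstacle I expect is step (ii)--(iii): making the change-of-measure argument rigorous in the multi-fidelity, anytime-capital setting, where $N$ is itself random and the perturbation must respect the $\zetam$ coupling across fidelities of the \emph{same} arm. One has to choose the alternative means $\widetilde{\mu}_k^{(\ell)}$ carefully so that (a) the resulting instance is a legal multi-fidelity instance (the $\zetam$'s still hold), (b) arm $k$ becomes strictly optimal so Assumption~\ref{asm:lbAsm} applies and yields $\PP_{\widetilde{\Pcal}}(A_n)\to 0$, and (c) the KL divergences converge to the stated limits as the perturbation shrinks. Handling the residual regret term $\widetilde R(\COST,\Acal)$ versus $\widetilde\rho(\COST,\Acal)$, and converting a per-$n$ statement into a $\liminf_{\COST\to\infty}$ statement via $N\geq n_\Lambda=\lfloor\COST/\costM\rfloor$, is routine but must be done with some care so that the lower-order terms in Assumption~\ref{asm:lbAsm} do not swamp the $\log n$ growth.
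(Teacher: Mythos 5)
Your proposal is correct and its architecture matches the paper's proof of Lemma~\ref{lem:lbLemma}: the same alternative instance (fidelities $\ell<p$ of arm $k$ left untouched, means at $\ell\ge p$ pushed to just above $\muMopt-\zetal$ so that $k$ becomes the unique optimal arm while the $\zetal$ constraints are preserved), the same use of Assumption~\ref{asm:lbAsm} in the modified problem to drive the probability of the ``arm $k$ is played little'' event to zero there, and the same Bernoulli bound $\kl{\mu_1}{\mu_2}\le(\mu_1-\mu_2)^2/(\mu_2(1-\mu_2))$ to convert KL divergences into the squared gaps $\Deltalksq$. The one substantive difference is the change-of-measure lemma you invoke: you use the transportation/data-processing form, which yields $\sum_{\ell\ge p}\EE[\Tlkn]\,\kl{\mulk}{\mutildelk}\gtrsim\logn$ directly in expectation and then extracts the $\min_{\ell}\costl/\Deltalksq$ factor by a simple reweighting, whereas the paper runs the Lai--Robbins style argument with the explicit likelihood-ratio event $\An$, thresholds $\fln$ (with the constant $C$ chosen large so that the minimiser lands at some $\ell\ge p$), and a strong-law step for the empirical log-likelihood ratios. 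Your packaging is arguably cleaner in that it avoids the almost-sure argument and works with expected play counts throughout; the paper's version buys an explicit high-probability statement that some fidelity is played at least $\fln$ times, which plugs directly into the cost-weighted regret. The only point needing care in your version (which you flagged yourself) is the treatment of fidelities $\ell>p$ with $\Deltalk\le 0$: leaving those means unchanged requires $\Deltalk<0$ strictly so that there is room to place $\mutildeMk$ strictly above $\muMopt$ while keeping $|\mutildeMk-\mutildelk|\le\zetal$; the paper instead moves every $\ell\ge p$ to $\mutildepk+\zetap-\zetal$. Either choice leaves the stated bound, which only involves $\{\ell\ge p:\Deltalk>0\}$, intact.
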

\insertpostspacing
\begin{proof}
For now we will fix $N=n$ and consider any game after $n$ rounds.
Our proof we will modify the reward distributions of the given arm $k$ for all 
$\ell\geq p$ and show
that any algorithm satisfying Assumption~\ref{asm:lbAsm} will not be able to
distinguish between both problems with high probability.
Since the KL divergence is continuous, for any $\epsilon>0$ we can choose 
$\mutildepk \in(\muMopt - \zetap, \muMopt - \zetap + \min_{\ell<p}-\Deltalk)$ 
such that $\kl{\mupk}{\mutildepk} < (1+\epsilon)\kl{\mupk}{\muMopt-\zetap}
=(1+\epsilon)\klpk$.

The modified construction for arm $k$, will also have Bernoulli distributions with means
$\mutildeonek,\mutildetwok,\dots,\mutildeMk$. $\mutildepk$ will be picked
to satisfy the two constraints above and for the remaining fidelities,
\[
\mutildelk = \mulk \hspace{0.1in}\textrm{for } \ell < p, \hspace{0.4in}
\mutildelk = \mutildepk + \zetap - \zetal \hspace{0.1in}
\textrm{for } \ell > p.
\]
Now note that for $\ell<p$, $\mutildeMk - \mutildelk = \mutildepk + \zetap - \mulk
< \muMopt -\zetap - \Deltalk + \zetap - \mulk =\zetal$;
similarly, $\mutildeMk - \mutildelk = \mutildepk + \zetap - \mulk
> \muMopt - \mulk > \muMk - \mulk > - \zetal$.
For $\ell > p$, $\mutildeMk - \mutildelk = \zetal$. Hence, the modified construction
satisfies the conditions on the lower fidelities laid out in Section~\ref{sec:prelims}
and we can use Assumption~\ref{asm:lbAsm}.
Further $\mutildeMk > \muMopt$, so $k$ is the optimal arm in the modified problem.
Now we use a change of measure argument.

Following~\citet{lai85bandits,bubeck12regret},
denote the expectations, probabilities and distribution in the original problem as 
$\EE,\PP, P$ and in the modified problem as $\EEtilde,\PPtilde, \tilde{P}$.
Denote a sequence of observations when playing arm $k$ at 
by $\{\Zlkt\}_{t\geq 0}$ and define,
\[
\klhatlks \;=\; \sum_{t=1}^s 
  \log \left(
  \frac{ \mulk \Zlkt + (1-\mulk)(1-\Zlkt) }{ \mutildelk \Zlkt +
(1-\mutildelk)(1-\Zlkt)}
  \right)
  \;=
  \sum_{t:\Zlkt = 1}\hspace{-0.05in} \log \frac{\mulk}{\mutildelk} \;+
  \sum_{t:\Zlkt = 0}\hspace{-0.05in} \log \frac{1-\mulk}{1-\mutildelk}.
\]
Observe that $\EE[s^{-1} \klhatlks] = \kl{\mulk}{\mutildelk}$. 
Let $A$ be any event in the $\sigma$-field generated by the observations in the game.
\begingroup
\allowdisplaybreaks
\begin{align*}
\PPtilde(A) &= \int\indfone(A)\ud\tilde{P} 
  = \int \indfone(A) \prod_{\ell=p}^M \,
\Bigg( \prod_{i=1}^{\Tlkn} \frac{\Ptildelk(\Zlki)}{\Plk(\Zlki)} \Bigg) \ud P \\
  &= \EE\bigg[\indfone(A) \exp\bigg(-\sum_{\ell\geq p}\klhatlkT\bigg) 
\bigg] \numberthis \label{eqn:tilting}
\end{align*}
\endgroup
Now let $\fln = C\logn$ for all $\ell$ such that $\Deltalk<0$ and 
$\fln = \frac{1}{M-p}\frac{1-\epsilon}{\kl{\mupk}{\mutildepk}}\logn$ otherwise.
(Recall that $\Deltalk<0$ for all $\ell<p$ and $\Deltapk>0$).
$C$ is a large enough constant that we will specificy shortly.
Define the following event $\An$.
\[
\An = \left\{ \Tlkn \leq \fln, \;\forall \ell  \indAnds
\klhatlkT \leq \frac{1}{M-p}(1-\epsilon/2)\logn, \;\forall \ell: \Deltalk>0
\right\}
\]
By~\eqref{eqn:tilting} we have $\PPtilde(\An) \geq \PP(\An) n^{-(1-\epsilon/2)}$.
Since $k$ is the unique optimal arm in the modified construction, 
by Assumptions~\ref{asm:lbAsm} we have $\forall\, a>0$,
\begin{align*}
\PPtilde(\An) \leq \PP\Big(\sum_m\Tmkn < \bigTheta(\logn)\Big) \leq 
\frac{\EE\big[n-\sum_m\Tmkn\big]}{n-\bigTheta(\logn)} \in \littleO(n^{a-1}) \\
\end{align*}
\toworkon{Reviewer 1 had said that the $\PP,\EE$ should have tilde's. I don't think
so. Can you check?}
By choosing $a<\epsilon/2$ we have $\PP(\An)\rightarrow 0$ as $n\rightarrow \infty$.
Next, we upper bound the probability of $\An$ in the original problem as follows,
\[
\PP(\An) \geq \PP\Bigg( \underbrace{\Tlkn \leq \fln,\;\forall \ell}_{\Anone}
  \indAnds
  \underbrace{\max_{s\leq\fn}\klhatlks \leq 
    \frac{1}{M-p}(1-\epsilon/2) \log n, \forall \ell: \Deltalk>0
}_{\Antwo} \Bigg)
\]
We will now show that $\Antwo$ remains
large as $n\rightarrow 0$. 
Writing $\Antwo = \bigcap_{\ell:\Deltalk>0} \Antwol$, we have
\[
\PP(\Antwol) = \PP\left( \frac{\fln(M-p)}{(1-\epsilon)\logn}\cdot \frac{1}{\fln}
\max_{s\leq \fln} \klhatmks \leq \frac{1-\epsilon/2}{1-\epsilon} \right).
\]
As $\fln\rightarrow\infty$, by the strong law of large numbers
$\frac{1}{\fln}\max_{s\leq \fln} \klhatmks \rightarrow \kl{\mumk}{\mutildemk}$. After
substituting for $\fln$ and repeating for all $\ell$, 
we get $\lim_{n\rightarrow\infty} \PP(\Antwo) = 1$.
Therefore, $\PP(\Anone) \leq \littleO(1)$. 
To conclude the proof, we upper bound $\EE[R_k(\COST)]$ as follows,
\begin{align*}
\frac{\EE[R_k(\COST)]}{\DeltaMk} \;&\geq\; 
  \PP( \exists\,\ell\, \suchthat \TlkN > \flN  )\cdot
  \EE[R_k(\COST)\,|\, \exists\,\ell\, \suchthat \TlkN > \flN ] 
  \;\geq\; \PP(\complement{\Anone})\cdot\min_\ell \flnl \costl \\
  &\geq\;(1-\littleO(1)) \min_{\ell\geq p} \frac{(1-\epsilon)\lognl \costl}
    {(M-p)\kl{\mulk}{\mutildelk}} 
  \;\;\geq\; \frac{\lognl}{M-p} (1-\littleO(1)) 
  \frac{1-\epsilon}{1+\epsilon}
  \min_{\ell>m} \frac{\costl}{\kllk}
\end{align*}
Above, the second step uses the fact that $N\geq \nCOST$ and $\log$ is increasing.
In the third step, we have chosen $C>\max_{\ell\geq p} \costl
\DeltaMk/\kl{\mulk}{\mutildelk}$ for $\ell<p$ large enough so that the minimiser will
be at $\ell\geq p$. The lemma follows by noting that the statements holds for all
$\epsilon>0$ and that for Bernoulli distributions
with parameters $\mu_1,\mu_2$, $\kl{\mu_1}{\mu_2} \leq
(\mu_1-\mu_2)^2/(\mu_2(1-\mu_2))$.
The constant given in the theorem is $c'_p =
\frac{1}{M-p}\min_{\ell>p}(\muMopt-\zetal)(1-\muMopt+\zetal)$.
\end{proof}

%

We can now use the above Lemma to prove theorem~\ref{thm:Rclb}.

\begin{proof}[\textbf{Proof of Theorem~\ref{thm:Rclb}}]
Let $k\in\Kcalmone$. We will use Lemma~\ref{lem:lbLemma} with $p=m$. 
It is sufficient to show that $\costl/\Deltalksq \gtrsim \costm/\Deltamksq$ for all
$\ell>m$.
First note that 
\[
\Deltalk = \muMopt-\mumk -\zetam + \mumk - \muMopt +\muMopt-\mulk + \zetam-\zetal
\leq \Deltamk + 2\zetam
\leq 2\Deltamk\sqrt{\frac{\costmm{m+1}}{\costm}}
\]
Here the last step uses that $\Deltamk >2\gammam = \sqrt{\costm/\costmm{m+1}}\zetam$.
Here we have used $\psi(\epsilon) = 2\epsilon^2$ which is just Hoeffding's
inequality.
Therefore, $\frac{\costm}{\Deltamksq} \leq 4\frac{\costmm{m+1}}{\Deltalksq}
\leq 4\frac{\costmm{\ell}}{\Deltalksq}$.

When $k\in\Kcalmtwo$, we use Lemma~\ref{lem:lbLemma} with 
$p=\ellz = \min\{\ell;\Deltalk > 0\}$.
However, by repeating the same argument as above, we can eliminate all $\ell>m$.
Hence, we only need to consider $\ell$ such that $\ellz\leq \ell \leq m$ and
$\Deltalk>0$ in the minimisation of Lemma~\ref{lem:lbLemma}. This is precisely the
set $\Lcalmk$ given in the theorem.
The theorem follows by repeating the above argument for all arms $k\in\Kcal$.
The constant $c_p$ in Theorem~\ref{thm:Rclb} is $c'_p/4$ where $c'_p$ is from
Lemma~\ref{lem:lbLemma}.
\end{proof}

%

\section{Details on the Simulations}
\label{app:simulations}

We present the details on the simulations used in the experiment.
Denote $\vec{\zeta} = (\zetaone,\zetatwo,\dots,\zetaM)$
and $\vec{\lambda} = (\costone,\costtwo,\dots,\costM)$.
Figure~\ref{fig:simArms} illustrates the mean values of these arms.

\begin{enumerate}
\item Gaussian: $M=500$, $M=3$, $\vec{\zeta} = (0.2, 0.1, 0)$,
  $\vec{\lambda} = (1,10,1000)$. \\
  The high fidelity means were chosen to be a uniform grid in $(0,1)$.
  The Gaussian distributions had standard deviation $0.2$.

\item Gaussian: $M=500$, $M=4$, $\vec{\zeta} = (1, 0.5, 0.2, 0)$,
  $\vec{\lambda} = (1, 5, 20, 50)$. \\
  The high fidelity means were sampled from a $\Ncal(0,1)$ distribution.
  The Gaussian distributions had standard deviation $1$.

\item Bernoulli: $M=200$, $M=2$, $\vec{\zeta} = (0.2, 0)$,
  $\vec{\lambda} = (1, 10)$. \\
  The high fidelity means were chosen to be a uniform grid in $(0.1, 0.9)$.
  The Gaussian distributions had standard deviation $1$.

\item Bernoulli: $M=1000$, $M=5$, $\vec{\zeta} = (0.5, 0.2, 0.1, 0.05, 0)$,
  $\vec{\lambda} = (1, 3, 10, 30, 100)$. \\
  The high fidelity means were chosen to be a uniform grid in $(0.1, 0.9)$.
  The Gaussian distributions had standard deviation $1$.

\end{enumerate}

In all cases above, the lower fidelity means were sampled uniformly within 
a $\pm\zetam$ band around $\muMk$.
In addition, for the Gaussian distributions we modified the lower fidelity means of
the optimal arm $\mumkk{\kopt}, m<M$ to be lower than the corresponding mean of a
suboptimal arm.
For the Bernoulli rewards, if $\mumk$ fell outside of $(0,1)$ its value was
truncated.
Figure~\ref{fig:simArms} illustrates the mean values of these arms.

For both \mfucbs and \ucbs we used $\rho=2$~\cite{bubeck12regret}.

\insertArmsFigure

\end{document}